\newcommand{\R}{\mathbb{R}}
\newcommand{\X}{\mathcal{X}}
\newcommand{\Z}{\mathcal{Z}}
\newcommand{\N}{\mathcal{N}}
\newcommand{\Loss}{\mathcal{L}}
\newcommand{\Lossfeat}{\mathcal{L}_{mse\mbox{-}feat}}
\newtheorem{theorem}{Theorem}
\DeclareMathOperator*{\argmin}{arg\,min}
\title{Exploiting GAN Internal Capacity for High-Quality Reconstruction of Natural Images}
\author{%
  Marcos Pividori\thanks{This preprint is the result of the work done for
  the undergraduate dissertation of M. Pividori supervised by L.C. Uzal and G.L. Grinblat, 
  and presented in July 2019.}\\
  Universidad Nacional de Rosario\\
  \texttt{marcospividori@gmail.com}\\
  \And
  Guillermo L. Grinblat \quad Lucas C. Uzal\\
  CIFASIS, UNR-CONICET\\
  \texttt{\{grinblat,uzal\}@cifasis-conicet.gov.ar}
}
\begin{document}

\maketitle

\begin{abstract}
Generative Adversarial Networks (GAN) have demonstrated impressive results in modeling the distribution of natural images, learning latent representations that capture semantic variations in an unsupervised basis.
Beyond the generation of novel samples, it is of special interest to exploit the ability of the GAN generator to model the natural image manifold and hence generate credible changes when manipulating images. However, this line of work is conditioned by the quality of the reconstruction.
Until now, only inversion to the latent space has been considered, we propose to exploit the representation in intermediate layers of the generator, and we show that this leads to increased capacity. In particular, we observe that the representation after the first dense layer, present in all state-of-the-art GAN models, is expressive enough to represent natural images with high visual fidelity.
It is possible to interpolate around these images obtaining a sequence of new plausible synthetic images that cannot be generated from the latent space.
Finally, as an example of potential applications that arise from this inversion mechanism, we show preliminary results in exploiting the learned representation in the attention map of the generator to obtain an unsupervised segmentation of natural images.
\end{abstract}

\section{Introduction}
The development of Generative Adversarial Networks (GAN) \cite{goodfellow2014generative} represented a milestone in the problem of data generation, that is learning generative models that map samples from a simple latent distribution into samples of a complex data distributions such as natural images. In recent years several works have shown empirical results of applying these methods to different tasks. Recently, \citet{brock2018large} demonstrated that it is possible to train GAN models on a larger scale and generate high-resolution diverse samples from complex datasets such as ImageNet \cite{russakovsky2015imagenet}, with remarkable results.

The representation learned by the generator captures meaningful semantic variations along the data distribution \cite{radford2015unsupervised}.
Therefore, it is desirable to exploit this learned representation. In particular,
there has been a recent interest in making use of the capability of
the generator to approximate the manifold of natural images (semantic image editing) \cite{zhu2016generative} \cite{brock2016neural}, projecting real images into the latent space and manipulating them to produce smooth visual changes over high level features, preserving the realism of the result. This requires the initial step of obtaining a reconstruction close to the original image. However, the GAN framework lacks an automatic inference mechanism, which represents a bottleneck in these cases.
Inverting the generator is not a trivial operation, specially when it is a complex model.
Furthermore, it is frequently observed that real images cannot be represented in the latent space, obtaining approximations with a high reconstruction error, which limits the positive impact of being able to make semantic edits.

When there is no latent value that permits to reconstruct the image, it is generally considered to be evidence that the generator cannot model certain image attributes \cite{metz2016unrolled}, commonly referred to as mode dropping. In this work, we demonstrate that in many of these cases, modes can be modeled in the internal layers of the generator, but this capacity is not exploited from the latent space. In particular, we show the importance of the learned representation after the first dense layer of the generator, since it is very expressive and allows to represent arbitrary natural images with high visual fidelity. We also propose an inversion algorithm which permits to find meaningful representations,
in the sense that we can perform interpolation experiments around those images obtaining a full sequence of new valid synthetic images that cannot be generated from the latent space. By allowing to reconstruct much better the real images, our work has direct impact on all previous work on high-level image editing and processing using the GAN generator \cite{zhu2016generative} \cite{brock2016neural} \cite{yeh2017semantic}.

In addition, we demonstrate that a generator of the complexity of BigGAN \cite{brock2018large} can be inverted with a non-parametric approach while previous works
\cite{zhu2016generative}  \cite{creswell2018inverting} \cite{lipton2017precise} mostly consider simple DCGAN models and datasets of low variability.
Finally, as a new practical application, we show that it is possible to exploit the learned representation in the attention map of the generator to obtain an unsupervised segmentation of real images. 

\section{Related work}

Since the development of GAN there has been a general interest in inverting the generator and exploiting the representation that was learned in an unsupervised manner. For example, for retrieval and classification 
\cite{radford2015unsupervised}, to manipulate images  \cite{zhu2016generative} \cite{brock2016neural}, and to provides relevant insights on which features the generator has learned to model \cite{creswell2018inverting}.

Inverting the generator implies finding a vector $z \in \Z$ that when provided as input results in a image $G(z)$ that is very close to the target image. Mapping an image from pixel space to latent space is not a trivial operation, as it requires inverting the generator, which usually consists of a complex model of several non-linear layers. In addition, the same image could be generated from different $z$ values or none at all. In their original formulation, GANs do not provide a direct inference mechanism. In this direction, previous works can be grouped into two main approaches:

\paragraph{Parametric models}
A line of work proposes to learn a parametric model (encoder) that maps each image to a representation in the latent space $\Z$. \citet{donahue2016adversarial} and \citet{dumoulin2016adversarially} proposed to train the encoder jointly with the generator and discriminator, in an adversarial setup. Other works considered training the encoder on a pre-trained generator, through a regression in the $\Z$ space \cite{donahue2016adversarial}, or in the space of images \cite{luo2017learning}.

In general, these approaches have the disadvantage of requiring the training of a third model (encoder), increasing the number of parameters to be learned and the risk of overfitting or underfitting (depending on the training method).
Although good results have been shown when using the encoder as a feature extractor for classification tasks, in general reconstructions are not good, failing to preserve the structure and style of images. Finally, introducing a complex model to invert the generator makes it questionable as a diagnostic tool to evaluate the representation of the generator \cite{creswell2018inverting}.

\paragraph{Optimization on the generator}
\citet{creswell2018inverting} proposed to find the $z$ vector that generates a certain image $x$, solving an optimization problem over the generator. Essentially, they follow the gradient of the generator $G$ with respect to its input. 
\citet{zhu2016generative} includes a similar approach as part of larger framework to manipulate images.
\citet{lipton2017precise} proposed an extension to improve the recovery in cases of uniform prior distribution on the latent space. A similar algorithm was previously proposed by \citet{mahendran2015understanding} to study the representation of deep networks.

Our work focuses on the study of the intermediate representations of the generator, the invertibility of the different layers and the degree of reconstruction of real images. We opt to use the optimization approach on the generator, since it does not require the introduction and training of a new set of parameters for modelling the inverse function, which could weaken the conclusions that can be drawn from the results.

Unlike previous work, we conduct our experiments on ImageNet \cite{russakovsky2015imagenet}, which provides wider variability, including different classes of objects in different situations, and therefore the model has to deal with greater complexity. The methods mentioned above do not perform well or directly do not show results on this more complex dataset. Moreover, previous work on non-parametric approaches only consider simple DCGAN models, while we demonstrate that we can invert much deeper generators (BigGAN), which makes the inversion much more challenging. In addition, we propose to extend this inversion mechanism to the hidden layers of the generator, showing the advantages of using the learned representation in the first fully connected layer, where it is possible to reconstruct natural images with high visual fidelity while still capturing high-level features.

\section{Inverting the generator to intermediate layers}

Let $G: \Z \rightarrow \X$ be a generator of deep architecture, composed of $n$ layers, that transforms a latent vector $z\in\R^{d_z}$ sampled from a distribution $z \sim P_z$ to an image $\hat{x} = G(z)$, with an implicit distribution $P_{model}$ that approximates $P_{data}$, the distribution of real data on the image space $\X = \R ^{W \times H \times C}$.

We can split the generator in a given hidden layer $l$ (
with dimensionality $d_l$), and analyze the learned representation at that point, redefining the generator as the composition of two generators:
\[
    G(z) = G^l_2 ( G^l_1 ( z ) )
\]
where $G^l_1 : \Z \rightarrow \R^{d_l}$ represents the transformation from latent space to layer $l$, and $G^l_2 : \R^{d_l} \rightarrow \X$ from layer $l$ to the space of images $\X$.

Let $P^l_h$ be the generated distribution in the hidden layer $l$ ($\R^{d_l}$) according to the random variable $H^l_{gen}=G^l_1(z)$ with $z \sim P_z$. Then, we can consider $G^l_2(h)$ as a generator from the learned latent distribution $h \sim P^l_h$, which has the same performance as the original generator $G(z)$ with $z \sim P_z$.


\subsection{Invertibility of the generator}

We say that $G$ is invertible in a set of images $S$ if it is right invertible, that is, if there is a function $G^{-1} : S \rightarrow \Z$ such that $G(G^{-1}(x)) = x\ \ \forall x\in S$. An optimal generator, that is, that can generate the target distribution in the output space ($P_{model} = P_{data}$), is invertible on real images.
 
\begin{theorem}
An optimal generator $G^*$ is right invertible $P_{data}$-almost everywhere in $\X$.
\end{theorem}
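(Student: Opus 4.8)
The plan is to reduce the statement to a single measure-theoretic fact: that the image $G^*(\Z)$ carries full $P_{data}$-measure. First I would unwind the definition of right invertibility and observe that it is purely a statement about the image. Right invertibility on a set $S$ is equivalent to the inclusion $S \subseteq G^*(\Z)$: if a right inverse $G^{-1}: S \rightarrow \Z$ exists then every $x \in S$ equals $G^*(G^{-1}(x))$ and hence lies in the image; conversely, if $S \subseteq G^*(\Z)$ then each fiber $(G^*)^{-1}(\{x\})$ is nonempty for $x \in S$, and selecting one point from each fiber produces the desired $G^{-1}$. Thus it suffices to exhibit a set $S$ with $S \subseteq G^*(\Z)$ and $P_{data}(\X \setminus S) = 0$; the natural candidate is $S = G^*(\Z)$ itself.

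The second step is to show $P_{data}(G^*(\Z)) = 1$. Here I would use that the model distribution is exactly the pushforward of the prior, $P_{model} = (G^*)_* P_z$, meaning $P_{model}(A) = P_z((G^*)^{-1}(A))$ for measurable $A \subseteq \X$. Taking $A = G^*(\Z)$, its preimage under $G^*$ is all of $\Z$, so $P_{model}(G^*(\Z)) = P_z(\Z) = 1$. Optimality gives $P_{model} = P_{data}$, whence $P_{data}(G^*(\Z)) = 1$ and $P_{data}(\X \setminus G^*(\Z)) = 0$. Combined with the first step applied to $S = G^*(\Z)$, this yields a right inverse defined $P_{data}$-almost everywhere, which is exactly the claim.

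The main obstacle is measurability: the image $G^*(\Z)$ of a continuous (or merely Borel) map need not be a Borel subset of $\X$, so a priori the quantity $P_{data}(G^*(\Z))$ may not even be defined. I would resolve this with descriptive set theory: since $\Z = \R^{d_z}$ is a Polish space and $G^*$ is Borel, the image $G^*(\Z)$ is an analytic set, and analytic sets are universally measurable, so $P_{data}(G^*(\Z))$ is well-defined and the argument above goes through unchanged. The fiber selection in the first step needs only the axiom of choice, since the paper's definition does not require $G^{-1}$ to be measurable; if a measurable inverse were additionally wanted, a uniformization theorem (Jankov--von Neumann) would supply a universally measurable selection. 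Alternatively, to sidestep this machinery one can argue directly with inner measure: every Borel $B \subseteq \X \setminus G^*(\Z)$ has empty preimage, so $P_{model}(B) = P_{data}(B) = 0$, forcing the complement of the image to have inner measure zero, which already suffices for the almost-everywhere conclusion.
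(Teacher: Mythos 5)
Your proposal is correct and takes essentially the same route as the paper: both arguments reduce the claim to showing that the complement of the image, $\X' = \X \setminus G^*(\Z)$, satisfies $P_{data}(\X') = P_{model}(\X') = 0$, using optimality ($P_{data} = P_{model}$) together with the fact that $P_{model}$ is the pushforward of $P_z$ under $G^*$ and hence concentrates on the image. Your explicit fiber-selection construction of the right inverse and your treatment of the measurability of $G^*(\Z)$ (analytic sets, universal measurability, or the inner-measure workaround) fill in details that the paper's terse proof leaves implicit, but the underlying argument is identical.
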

\begin{proof}
Let $\X' = \X - G^*(\Z)$ be the set of images that can not be generated. Then $\X'$ has measure zero under $P_{data}$, therefore $G^*$ is right invertible $P_{data}$-almost everywhere in $\X$.
\[
P_{data}(\X') = P_{model}(\X') = 0 \qedhere
\]
\end{proof}
For a non-optimal generator, we expect real images to be approximately invertible, where the reconstruction quality depends on the layer considered for the representation. At the output layer, it is possible to represent all images with zero reconstruction error. As we move in the network backwards (considering the input space of $G_2^l$ for some layer $l$), the reconstruction error for certain images is expected to increase.

Since in the generator the information flows forward, every image represented in a given layer has no better representation in previous layers, and at least as good in the following layers. If we consider two layers $l < m$, the image set of $G_2^{l}$ is a subset of the image set of $G_2^{m}$ ($G_2^{l}(\R^{d_l}) \subseteq G_2^{m}(\R^{d_{m}})$). Then, the closer to the latent space, the more restricted the image set $G_2^l(\R^{d_l})$ at pixel level, which means that more images are filtered out. It would be expected that in a properly trained generator, the increase in reconstruction error is manifested mainly in irrelevant areas of the distribution at pixel level (e.g. white noise images) and to a much less extent in natural images.




\subsection{First dense layer}

In particular, in this work we are interested in analyzing the learned representation in the first dense layer (included in all the state-of-the-art GAN architectures). Suppose the first layer of the generator has $d_{1}$ fully connected units, then we split the generator at this point:
\[
    G_1^{1}(z) = W_1 z + b_1
\]
 where $W_1 \in \R^{d_{1} \times d_z}$. $G_1^{1}(\Z)$ represents a linear subspace of dimensionality at most $d_z$ (column space of $W_1$) in $\R^{d_{1}}$. In general $d_{1}$ is one or two orders of magnitude greater than $d_z$.

From now on, we omit the index $l$ as we will always consider the first fully connected layer (e.g. $G_1$ denotes the first linear mapping $G_1^{1}$ and $G_2$ the rest of the network $G_2^{1}$), although most of the concepts can be extended to other intermediate layers. The output space $\R^{d_1}$ is referred to as \textit{the space of the dense layer}.

\subsection{Proposed algorithm}

\begin{figure}
  \centering
  \includegraphics[width=0.95\linewidth]{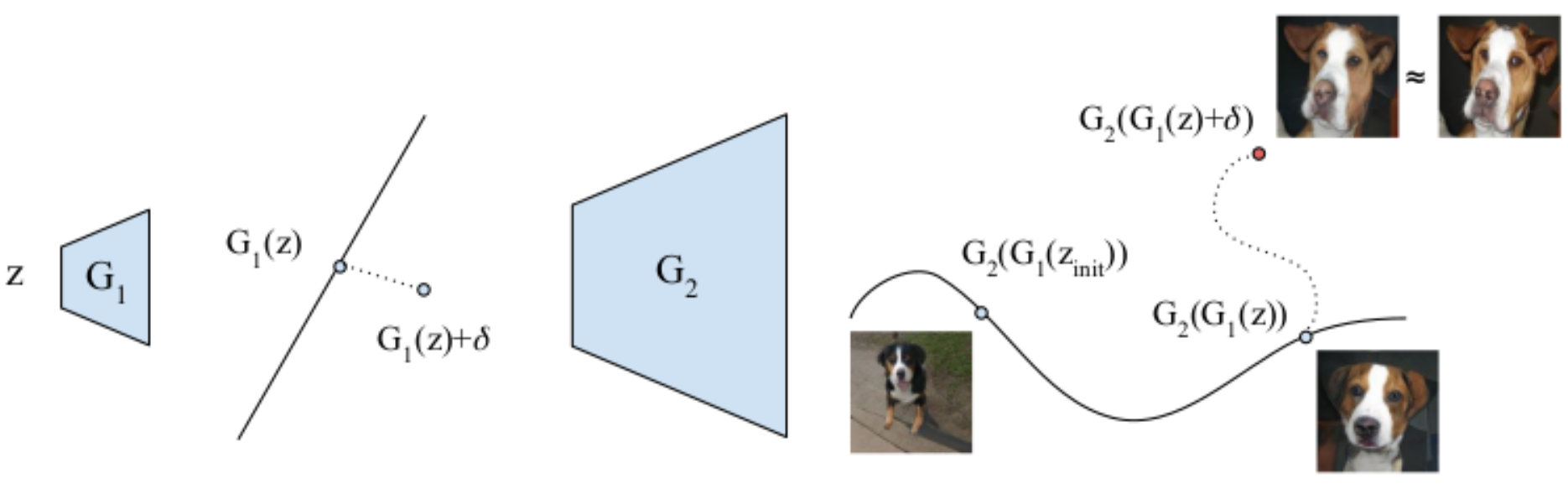}
  \caption{Interpretation of the two-step optimization at different levels of representation. In the space of the first layer, the linear subspace $G_1(\Z)$ is considered first, then the displacement $\delta$ permits to explore the whole space. In pixel-level, first we find the best approximation in the generated manifold $G(\Z)$ and then $\delta$ involves considering the directions captured by the manifold $G_2(\R^d)$.}
  \label{fig:optimization}
\end{figure}

Given a generator $G$ whose computational graph is differentiable, we want to infer a representation $h$ in the space of the first layer that makes it possible to reconstruct a target image $x \in \X$, guided by the gradient $\nabla_h \Loss(G_2(h),x)$.
\space $\Loss$, the loss to minimize, represents the reconstruction error between the generated and target image. In other words, we approximate the following optimization by gradient descent:
\[
    h^* = \argmin_{h \in \R^d} \Loss(x, G_2(h)).
\]
The space of the first dense layer is of a much greater dimensionality than that of the generated linear subspace $G_1(\Z)$, therefore the capacity of representation is considerably more extensive. When optimizing in the space of the dense layer without regularization, the algorithm obtains representations that, although map to the target image, are not related to the distribution of points generated in this space of the network during training. As a result, the obtained representation is poor and does not seem to capture the factors of variation in the data. For example when shifting the vector in any direction, the image degrades.

Instead, it is desirable to choose, among all possible representations of an image, the one closest to the generated distribution ($P_h$), where the generator was adjusted to output plausible samples.
Then, we propose to optimize on $G_2(G_1(z)+\delta)$ where $\delta$ represents a displacement in the first layer with respect to the point $G_1(z)$. The optimization is split into two steps, first the reconstruction error on the latent vector is optimized:
\[
    z^* = \argmin_{z \in \Z} \Loss(x,G_2(G_1(z))) - \lambda_1\ \log\ P_z(z)
\]
Then, it is optimized over a displacement in the full space of the dense layer:
\[
\delta^* = \argmin_{\delta \in \R^d}\Loss(x, G_2(G_1(z^*)+\delta)) + \lambda_2\ \left\lVert \delta\right\rVert_1
\]
Obtaining the final representation $h^* = G_1(z^*) + \delta^*$.

\paragraph{Regularization in z.}
The term $\log\ P_z(z)$ represents the log likelihood of $z$, and regularizes the search in the latent space to ensure that the optimization stays within probable regions of $\Z$, as proposed in \cite{creswell2018inverting}. For example, if the network is trained with $z \sim \N(0, I)$, the log likelihood reduces to a regularization in $\left\lVert z\right\rVert_2^2$.

\paragraph{Regularization in $\delta$.}
In a space of such high dimensionality as the first layer, we choose the $l_1$ norm to regularize $\delta$ because it encourages sparse solutions, including the least amount of non-zero components necessary to properly approximate the target image.

We can analyze this two-step optimization in the space of the dense layer. First, when optimizing up to the latent space, only points of the linear subspace $G_1(\Z)$ are considered.
Moreover, if the first linear mapping $G_1$ is injective (a reasonable assumption), optimizing $G_2(G_1(z))$ with a regularization term in the log likelihood of $z$ is equivalent to optimizing $G_2(h)$ over the entire space $\R^d$ of the dense layer, with a regularization in the log likelihood of $h$ with respect to the generated distribution $P_h$.

\begin{theorem}
If the first linear mapping $G_1$\ is\ injective:
\[
\argmin_{h \in \R^d}\Loss(x,G_2(h)) - \lambda_1 \log\ P_h(h) = G_1(\argmin_{z \in \Z} \Loss(x,G_2(G_1(z))) - \lambda_1 \log\ P_z(z))
\]
\end{theorem}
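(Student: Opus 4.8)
The plan is to exploit the fact that $G_1(z) = W_1 z + b_1$ is an injective affine map, so that the density $P_h$ of $H_{gen} = G_1(z)$ relates to $P_z$ by a \emph{constant} Jacobian factor. I would organize the argument into three steps: restricting the left-hand minimization to the affine subspace $G_1(\Z)$, performing a change of variables to rewrite the objective in terms of $z$, and observing that the two objectives then differ only by an additive constant. First I would note that $H_{gen}$ is supported entirely on the affine subspace $G_1(\Z) = \{W_1 z + b_1 : z \in \R^{d_z}\}$, which has dimension at most $d_z < d$. Interpreting $P_h$ as the density of $H_{gen}$ on this subspace, we have $P_h(h) = 0$ for $h \notin G_1(\Z)$, hence $-\lambda_1 \log P_h(h) = +\infty$ there. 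Consequently any minimizer of the left-hand objective must lie in $G_1(\Z)$, and it suffices to minimize over $h \in G_1(\Z)$.

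Next, since $G_1$ is injective it is a bijection from $\Z = \R^{d_z}$ onto $G_1(\Z)$, so every $h \in G_1(\Z)$ can be written uniquely as $h = G_1(z)$. The key step is the change of variables: because $G_1$ is affine, the pushforward of $P_z$ under $G_1$ has density (with respect to Lebesgue measure on the subspace) given by $P_h(G_1(z)) = P_z(z) / \sqrt{\det(W_1^\top W_1)}$, where the Gram determinant $\det(W_1^\top W_1)$ is strictly positive (full column rank follows from injectivity) and independent of $z$. Taking logarithms gives $\log P_h(G_1(z)) = \log P_z(z) + C$ with the constant $C = -\tfrac{1}{2}\log\det(W_1^\top W_1)$.

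Substituting into the restricted left-hand objective, and using $G_2(h) = G_2(G_1(z))$, would yield
\[
\Loss(x, G_2(h)) - \lambda_1 \log P_h(h) = \Loss(x, G_2(G_1(z))) - \lambda_1 \log P_z(z) - \lambda_1 C.
\]
Since $-\lambda_1 C$ is constant in $z$, it does not affect the location of the minimum, so the minimizing $z$ coincides with the right-hand $\argmin$, call it $z^*$, and the corresponding left-hand minimizer is $h^* = G_1(z^*)$, which is exactly the claimed identity.

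I expect the main obstacle to be the careful handling of the fact that $P_h$ is a \emph{singular} distribution on $\R^d$, concentrated on a measure-zero subspace. One must fix the convention that $P_h$ denotes a density on the affine subspace $G_1(\Z)$ rather than with respect to Lebesgue measure on all of $\R^d$, and then verify that the change-of-variables Jacobian for an injective affine map is genuinely the \emph{constant} Gram factor $\sqrt{\det(W_1^\top W_1)}$, so that it drops out of the $\argmin$. The remaining steps (the support restriction and the elimination of the additive constant) are then routine.
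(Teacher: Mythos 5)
Your proof is correct and follows essentially the same route as the paper's: restrict the minimization to the support $G_1(\Z)$ where $P_h$ is nonzero, apply the change of variables with a Jacobian factor that is constant in $z$, and drop the resulting additive constant from the $\argmin$. In fact your treatment is slightly more careful than the paper's, which writes the Jacobian factor as $|\det(W_1)|$ even though $W_1 \in \R^{d_1 \times d_z}$ is not square; your Gram determinant $\sqrt{\det(W_1^\top W_1)}$ is the correct form, and your explicit remark that $P_h$ must be read as a density on the affine subspace (not with respect to Lebesgue measure on $\R^d$) makes precise a point the paper leaves implicit.
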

\begin{proof}
\begin{align*}
&\quad\ \argmin_{h\in \R^d} \Loss(x, G_2(h)) - \lambda_1 \log\ P_h(h) \\ 
&= \argmin_{h\in G_1(\Z)}\Loss(x, G_2(h)) - \lambda_1 \log\ P_h(h)  \tag{$P_h(h)= 0\ \ \forall h \in \R^d - G_1(\Z)$}\\
&= \argmin_{h\in G_1(\Z)}\Loss(x, G_2(G_1(G_1^{-1}(h)))) - \lambda_1 \log\ (P_z(G_1^{-1}(h)) \ \frac{1}{|det(\frac{\partial G_1(G_1^{-1}(h))}{\partial G_1^{-1}(h)})|}) \tag{$G_1$ injective}\\
&= \argmin_{h\in G_1(\Z)}\Loss(x, G_2(G_1(G_1^{-1}(h)))) - \lambda_1 \log\ \frac{P_z(G_1^{-1}(h))}{|det(W_1)|}\\ 
&= \argmin_{h\in G_1(\Z)}\Loss(x, G_2(G_1(G_1^{-1}(h)))) - \lambda_1 \log\ P_z(G_1^{-1}(h)) + \lambda_1\ \log\ |det(W_1)|\\
&= \argmin_{h\in G_1(\Z)}\Loss(x, G_2(G_1(G_1^{-1}(h)))) - \lambda_1 \log\ P_z(G_1^{-1}(h))\\
&= G_1(\argmin_{z\in \Z}\Loss(x, G_2(G_1(z))) - \lambda_1 \log\ P_z(z)) && \qedhere
\end{align*}
\end{proof}

Then, in the second step of the optimization, the displacement $\delta$ exploits the internal capacity of the generator, considering all the space in the first layer ($\R^d$), permitting to combine the features independently, out of the linear restriction established by the mapping $G_1$. Note that this involves generating images that cannot be generated from the latent space.

The coefficient $\lambda_2$ represents a compromise between the quality of reconstruction and the quality of the obtained representation. Larger values of $\lambda_2$ mean that the obtained representation remains close to the generated distribution, but the reconstruction may not be that close to the target image.
On the other hand, as the value of $\lambda_2$ decreases, the optimization displaces to a greater extent over the whole space of the first layer and therefore, even though the reconstruction ($\hat{x}$) resembles the original image ($x$), the obtained representation ($h$) may not be as meaningful.

The two-step optimization can be interpreted in the output space of the generator (Figure \ref{fig:optimization}). First we look for the best approximation of the target image considering $d_z$ directions in the generated manifold $G(\Z)$. Then, starting from this point, we extend the search to all the directions along the manifold $G_2(\R^d)$, looking for the simplest way to get to the target image, that is, the one that involves moving in the least amount of possible directions.




\section{Experiments with the BigGAN generator}

Prior work on inverting GAN generators with non-parametric approaches only consider simple DCGAN models and datasets of low variability \cite{creswell2018inverting} \cite{lipton2017precise}. When attempting to invert more complex models such as BigGAN, the problem is more challenging, as the function to be optimized ($\Loss(x, G(z))$) seems to be highly non-convex, conditioning the result of the optimization. In the following experiments \footnote{Source code is available at: \url{https://github.com/CIFASIS/exploiting-gan-internal-capacity}}, we consider the pre-trained class-conditional BigGAN model trained on ImageNet for an image resolution of 128x128, publicly available\footnote{\url{https://tfhub.dev/deepmind/biggan-128/2}}.

\subsection{Reconstruction error}

We found in preliminary experiments that by simply optimizing over the Mean Square Error (MSE) at pixel level it is not possible to recover the representation of generated images ($z$), since the optimization frequently gets stuck in non-optimal critical points.
Metrics that compare images pixel-by-pixel, such as MSE, do not capture very well the similarity of images according to our perception of natural images \cite{wang2009mean}. As proposed in \cite{johnson2016perceptual} \cite{dosovitskiy2016generating}, some invariance to irrelevant transformations and at the same time sensitivity to important image properties, such as edges and textures, can be achieved through the use of metrics in a higher level feature space, 
extracted by convolutional networks.

Let $C_l$ represent the activation at the layer $l$ of the InceptionV3 \cite{szegedy2016rethinking} network trained to classify ImageNet, then we propose to compare two images considering the euclidean distance in the feature space: $\left\lVert  C_l(x) - C_l(\hat{x}) \right\rVert_2$.
Given that different layers capture different levels of complexity \cite{goodfellow2009measuring}, after considering the layers of the InceptionV3 network, best results where achieved when comparing images at the representation of layer 7 (\textit{Mixed\_7a}), with a compromise between generalization and preservation of the visual structure of the images. Then, we define the reconstruction error as a linear combination of the MSE and the distance in the feature space extracted by the InceptionV3 network:
\begin{center}
  $\Lossfeat(x,\hat{x}) = \left\lVert x-\hat{x} \right\rVert _2^2 + \lambda_{feat}\ \left\lVert \ C(x) - C(\hat{x}) \right\rVert_2^2$
\end{center}
Following the hypothesis \cite{bengio2013better} that deeper representations unfold the manifolds on which real data is concentrated, one could hypothesize that the representation learned by a deep network trained on the set of real images (for example InceptionV3) succeeds in disentangling the manifold of natural images and, therefore, also the generated manifold $G(\Z)$ that attempts to approximate it. This is consistent with the empirical results: the euclidean distance at the level of features extracted by a deep network provides better gradients that allow to traverse the generated manifold until recovering the latent vector of generated images.

\subsection{Inverting generated images}

\begin{figure}
  \centering
  \begin{subfigure}{0.32\textwidth}
  \includegraphics[width=\linewidth]{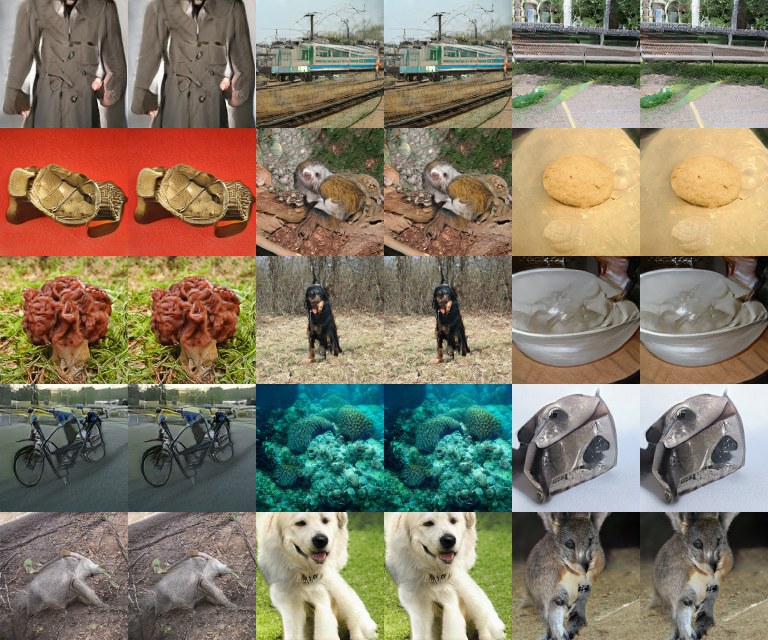}
  \caption{Generated images (latent)} \label{fig:latent_fake}
  \end{subfigure}
  \hspace*{\fill}
  \begin{subfigure}{0.32\textwidth}
  \includegraphics[width=\linewidth]{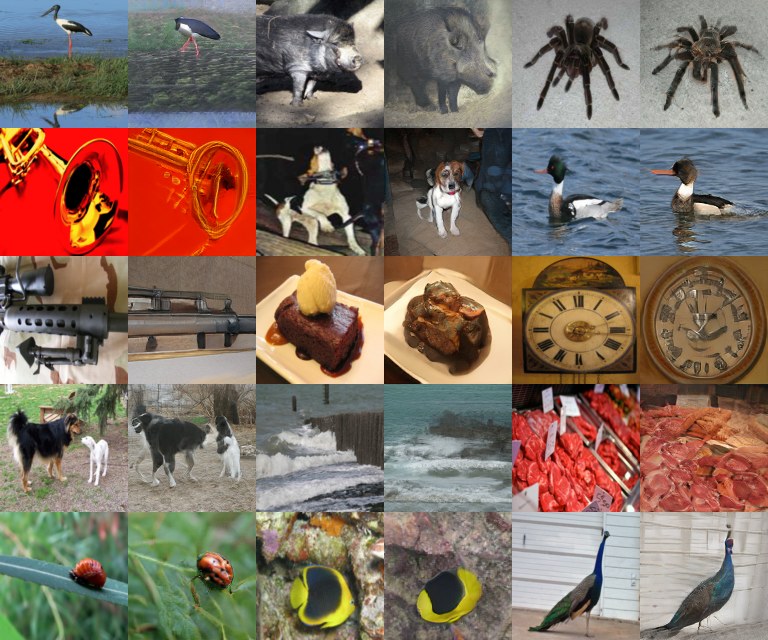}
  \caption{Real images (latent).} \label{fig:latent_real}
  \end{subfigure}
  \hspace*{\fill}
  \begin{subfigure}{0.32\textwidth}
  \includegraphics[width=\linewidth]{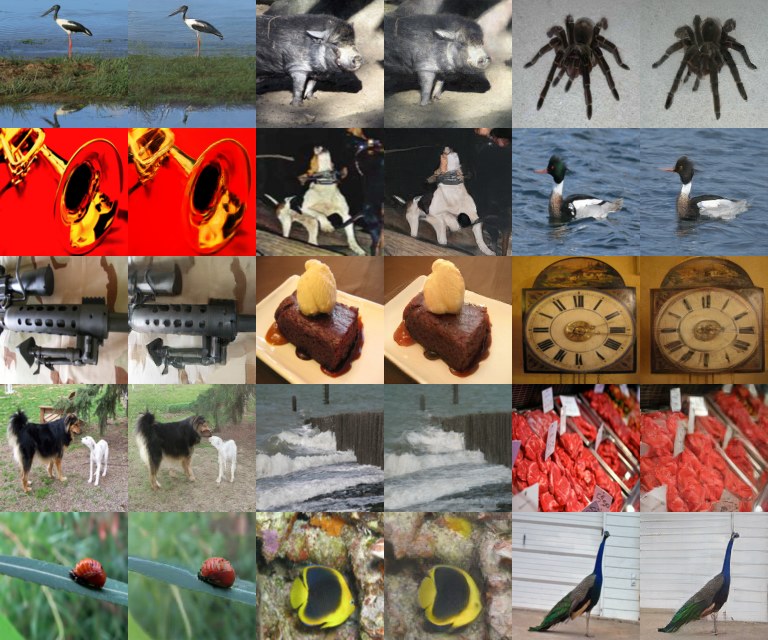}
  \caption{Real images (dense).} \label{fig:dense}
  \end{subfigure}
  \caption{Results of inverting of BigGAN for a random sample of images. Generated images can be reconstructed with high fidelity to the latent space (a). For real images, reconstructions to the latent space (b) are semantically related to the target image. The quality of the reconstructions is improved when considering the representation in the first dense layer (c).}
\end{figure}

\begin{figure}
  \centering
  \includegraphics[width=0.49\linewidth]{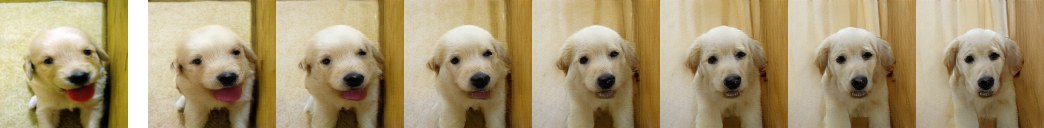}
  \hspace*{\fill}
  \includegraphics[width=0.49\linewidth]{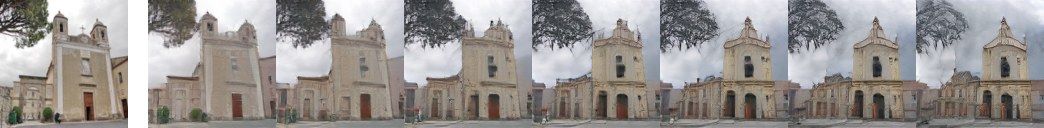}
  \includegraphics[width=0.49\linewidth]{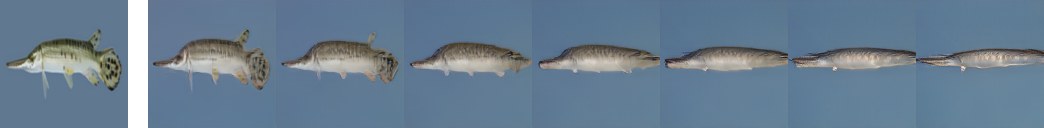}
  \hspace*{\fill}
  \includegraphics[width=0.49\linewidth]{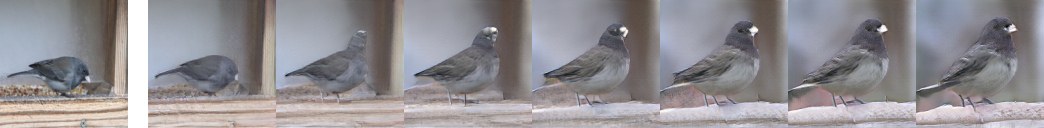}
  \includegraphics[width=0.49\linewidth]{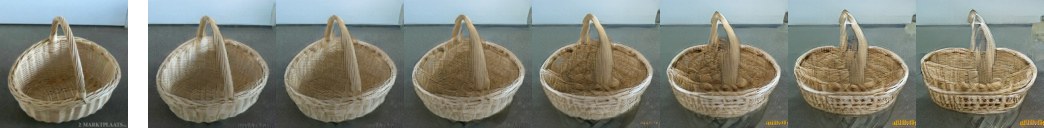}
  \hspace*{\fill}
  \includegraphics[width=0.49\linewidth]{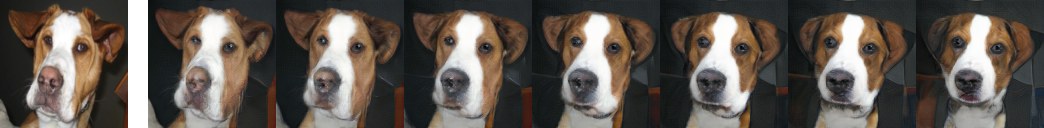}
  \includegraphics[width=0.49\linewidth]{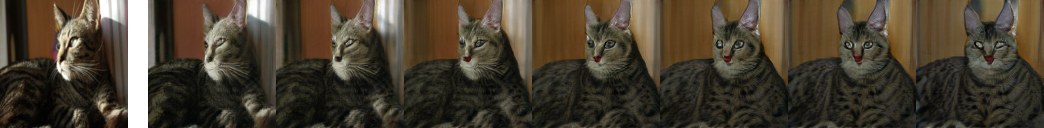}
  \hspace*{\fill}
  \includegraphics[width=0.49\linewidth]{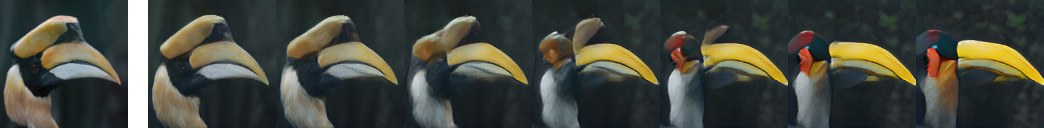}
  \caption{Left: Real image. Right: Linear interpolation between its closest reconstruction in the first step of the optimization ($G_2(G_1(z^*))$, right) and the closest reconstruction in the second step ($G_2(G_1(z^*)+\delta^*)$, left). Note that except from the right-most column, the rest of intermediate images can not be generated from the latent space.}
  \label{fig:interpolation}
\end{figure}

\begin{figure}
  \centering
  \includegraphics[width=0.49\linewidth]{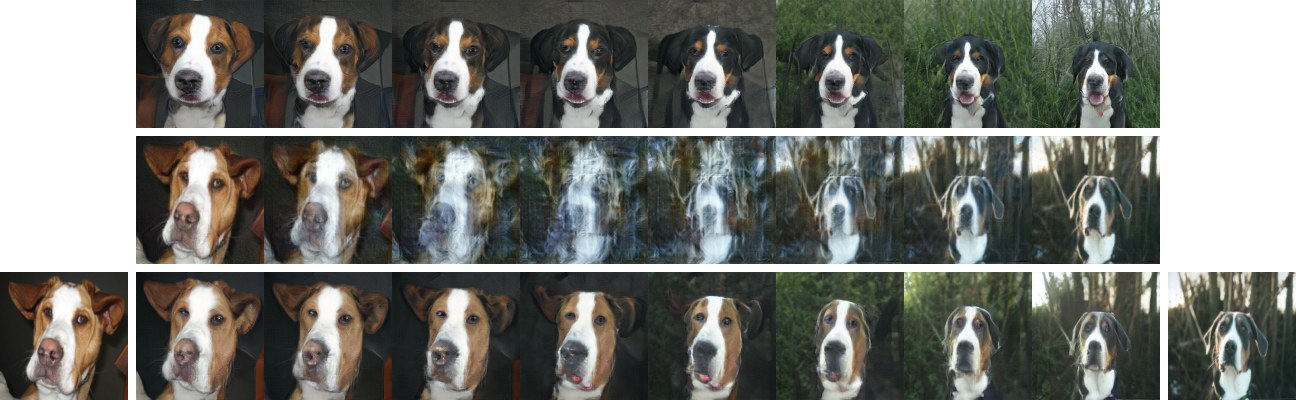}
  \hspace*{\fill}
  \includegraphics[width=0.49\linewidth]{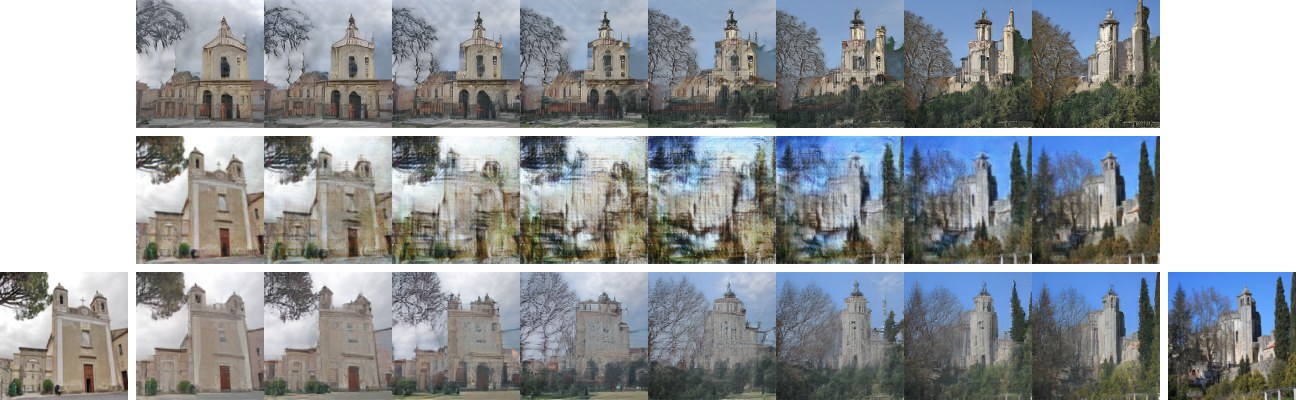}
  \caption{Linear interpolation between the reconstruction of two real images for the same class. First row: reconstruction in the latent space. Second row: reconstruction in the dense layer without regularization. Third row: reconstruction in the dense layer with the proposed two-step optimization.}
  \label{fig:linear_interp}
\end{figure}

Before drawing conclusions about the representational power of different layers of the network, it is important to ensure that the inversion mechanism works correctly, verifying that it is possible to recover generated images for which exists an input value that perfectly maps to the target image.

Considering the generated distribution $P_{model}$ as a baseline, 1000 images were generated ($x \sim P_{model}$) and inverted to the latent space minimizing the reconstruction error $\Lossfeat$ with a regularization on the log likelihood. As can be observed in Figure \ref{fig:latent_fake}, it is possible to reconstruct the images to an insignificant reconstruction error which confirms that the inversion algorithm is working properly.

\subsection{Inverting natural images}

Next, the distribution of natural images $P_{data}$ is considered, sampling 1000 random images of ImageNet ($x \sim P_{data}$), and inverting them following the proposed two-step optimization on the reconstruction error $\Lossfeat$.

In the first step of the optimization (up to the latent space) we obtain reconstructions that are semantically related to the target image although they differ substantially (Figure \ref{fig:latent_real}). Since the inversion mechanism works correctly for generated images, we can conclude that a high reconstruction error is obtained for real images because they cannot be represented in the latent space.
When the second step of the optimization is performed, including a displacement $\delta$ over the entire space of the dense layer, the reconstruction error is considerably reduced, obtaining high-quality reconstructions (Figure \ref{fig:dense}).

\begin{wrapfigure}{r}{0.5\textwidth}
  \centering
  \includegraphics[width=0.95\linewidth]{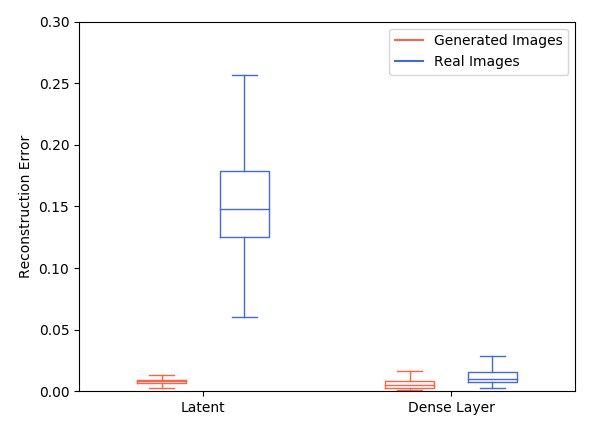}
  \caption{Reconstruction error ($\Lossfeat$) for a random sample of 1000 generated and 1000 real images, at different levels of representation.}
  \label{fig:boxplot}
\end{wrapfigure}

As shown in Figure \ref{fig:boxplot}, there is a significant gap between the reconstruction error over real images at the two levels of the network, demonstrating the difference in the power of representation of natural images. In the first dense layer, natural images can be represented with a low reconstruction error, similar to generated images.

At the same time, representations obtained in the first layer are meaningful, as can be seen when interpolating linearly with generated images (Figure \ref{fig:interpolation}) and with the representation of other natural images of the same class (Figure \ref{fig:linear_interp}), producing a smooth transition. It should be noted that all intermediate images resulting from these interpolations cannot be generated from the latent space, as they are outside the linear subspace $G_1(\Z)$. This demonstrates that the generator has a greater exploitable capacity to generate images than what is captured by the latent space.

\section{Unsupervised segmentation}

Inverting the generator on natural images means learning to generate them, and the structure of the network can provide information on what the components of the images are and how they relate to each other.
As an example of potential application, we show that the learned representation in the self-attention map of the generator \cite{zhang2018self} \cite{wang2018non} can be exploited to obtain an unsupervised segmentation of real images. We consider the Embedded Gaussian variant of the non-local block \cite{wang2018non}, as employed in BigGAN. Given input feature vectors $x_i$ ($i$ index over $N$ spacial positions) the attention matrix $A \in \R^{N \times N}$ is defined as:
\[
    A_{ij} = \dfrac{e^{\psi(x_j)^T\ \phi(x_i)}}{\sum_je^{\psi(x_j)^T\ \phi(x_i)}}
\]
where $\psi(x)=W_\psi\ x$ and $\phi(x)=W_\phi\ x$ are two embedding spaces. $A_{ij}$ indicates the attention on position $j$ when computing the output of position $i$. Using this learned attention map, we propose to define a dissimilarity matrix $D \in [0,1]^{N \times N}$ between different points of the image:
\[
    D = \left( 1 - \dfrac{ A + A^T }{2}\right) \odot (1-I).
\]
In this way, the matrix is symmetric and the dissimilarity of a point with itself is minimum.

In particular, for BigGAN on 128x128 resolution, the non-local block is included at 64x64 resolution, and contains a max pooling intermediate layer (Subsampling Trick \cite{wang2018non}). A 32x32 attention map is computed for each point of the 64x64 resolution, determining how much attention to pay to each section of the 32x32 resolution. Therefore, we upsample it spatially to a 64x64 map, and then compute the dissimilarity matrix.

Then, we apply Agglomerative Hierarchical Clustering to the dissimilarity matrix $D$, with average linkage as link criteria between different clusters. This can be interpreted as joining clusters on the basis of the average attention paid to each other. Resulting clusters represent an unsupervised segmentation of the image at the resolution of the attention map (64x64 in our example).

Therefore, given a high quality reconstruction of a real image, it is possible to segment the image using the structure of the network. As shown in the Figure \ref{fig:segmentation}, although not perfect, resulting clusters group significant sections of the image associated with the same concept.

\begin{figure}
  \centering
  \includegraphics[width=0.3\linewidth]{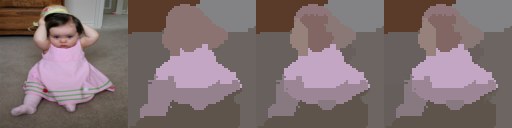}
  \includegraphics[width=0.3\linewidth]{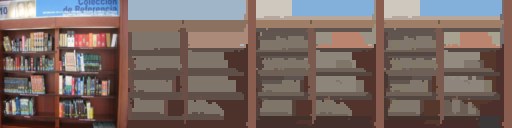}
  \includegraphics[width=0.3\linewidth]{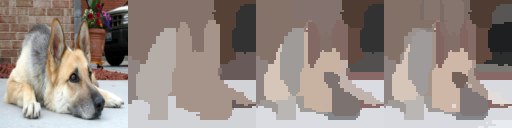}
  \includegraphics[width=0.3\linewidth]{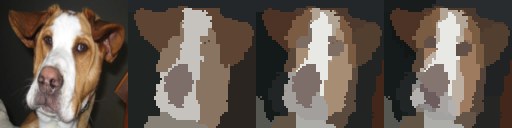}
  \includegraphics[width=0.3\linewidth]{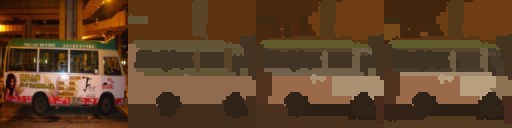}
  \includegraphics[width=0.3\linewidth]{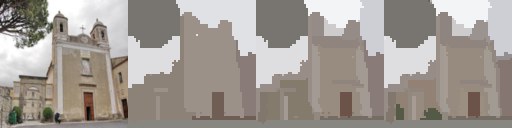}
  \includegraphics[width=0.3\linewidth]{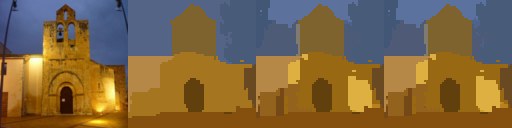}
  \includegraphics[width=0.3\linewidth]{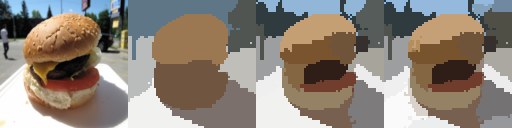}
  \includegraphics[width=0.3\linewidth]{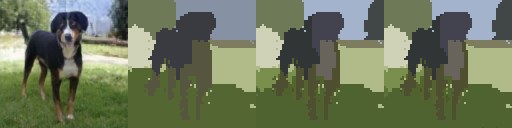}
  \caption{Examples of unsupervised segmentation (64x64) for real images, after inverting the generator to the first layer (different numbers of clusters: 8, 20, 40). Only for visualization purposes, each cluster is associated to the average color of all its pixel members.}.
  \label{fig:segmentation}
\end{figure}

\section{Conclusion}

The possibility of accessing to high-level representations of natural images and their respective reconstructions allows for countless image processing and editing applications based on manipulation of semantic features. The success of these applications is limited by the quality of the reconstruction and the quality of the reached representation.

In this work we have shown that it is possible to reach \textit{good} representations of real natural images in the space after the first layer of a GAN generator. These are good representations in the sense that, on one hand, we can obtain high-quality reconstructions of the images and, on the other hand, we can perform interpolation experiments in the representation space obtaining a full sequence of plausible images from any real image to a target image (real o generated).

All the experiments were performed using the state-of-the-art BigGAN generator and the ImageNet dataset, which is the most challenging scenario. Finally, as a further example of potential application, we also showed that the learned representation in the self-attention map of the generator can be exploited to obtain an unsupervised segmentation of real images.


\subsubsection*{Acknowledgments}
The authors acknowledge grant support from ANPCyT PICT-2016-4374 and ASaCTeI IO-2017-00218. The Titan Xp used for this research was donated by the NVIDIA Corporation.

\bibliographystyle{unsrtnat}
\bibliography{bib}{}

\begin{thebibliography}{22}
\providecommand{\natexlab}[1]{#1}
\providecommand{\url}[1]{\texttt{#1}}
\expandafter\ifx\csname urlstyle\endcsname\relax
  \providecommand{\doi}[1]{doi: #1}\else
  \providecommand{\doi}{doi: \begingroup \urlstyle{rm}\Url}\fi

\bibitem[Goodfellow et~al.(2014)Goodfellow, Pouget-Abadie, Mirza, Xu,
  Warde-Farley, Ozair, Courville, and Bengio]{goodfellow2014generative}
Ian Goodfellow, Jean Pouget-Abadie, Mehdi Mirza, Bing Xu, David Warde-Farley,
  Sherjil Ozair, Aaron Courville, and Yoshua Bengio.
\newblock Generative adversarial nets.
\newblock In \emph{Advances in neural information processing systems}, pages
  2672--2680, 2014.

\bibitem[Brock et~al.(2018)Brock, Donahue, and Simonyan]{brock2018large}
Andrew Brock, Jeff Donahue, and Karen Simonyan.
\newblock Large scale gan training for high fidelity natural image synthesis.
\newblock \emph{arXiv preprint arXiv:1809.11096}, 2018.

\bibitem[Russakovsky et~al.(2015)Russakovsky, Deng, Su, Krause, Satheesh, Ma,
  Huang, Karpathy, Khosla, Bernstein, et~al.]{russakovsky2015imagenet}
Olga Russakovsky, Jia Deng, Hao Su, Jonathan Krause, Sanjeev Satheesh, Sean Ma,
  Zhiheng Huang, Andrej Karpathy, Aditya Khosla, Michael Bernstein, et~al.
\newblock Imagenet large scale visual recognition challenge.
\newblock \emph{International journal of computer vision}, 115\penalty0
  (3):\penalty0 211--252, 2015.

\bibitem[Radford et~al.(2015)Radford, Metz, and
  Chintala]{radford2015unsupervised}
Alec Radford, Luke Metz, and Soumith Chintala.
\newblock Unsupervised representation learning with deep convolutional
  generative adversarial networks.
\newblock \emph{arXiv preprint arXiv:1511.06434}, 2015.

\bibitem[Zhu et~al.(2016)Zhu, Kr{\"a}henb{\"u}hl, Shechtman, and
  Efros]{zhu2016generative}
Jun-Yan Zhu, Philipp Kr{\"a}henb{\"u}hl, Eli Shechtman, and Alexei~A Efros.
\newblock Generative visual manipulation on the natural image manifold.
\newblock In \emph{European Conference on Computer Vision}, pages 597--613.
  Springer, 2016.

\bibitem[Brock et~al.(2016)Brock, Lim, Ritchie, and Weston]{brock2016neural}
Andrew Brock, Theodore Lim, James~M Ritchie, and Nick Weston.
\newblock Neural photo editing with introspective adversarial networks.
\newblock \emph{arXiv preprint arXiv:1609.07093}, 2016.

\bibitem[Metz et~al.(2016)Metz, Poole, Pfau, and
  Sohl-Dickstein]{metz2016unrolled}
Luke Metz, Ben Poole, David Pfau, and Jascha Sohl-Dickstein.
\newblock Unrolled generative adversarial networks.
\newblock \emph{arXiv preprint arXiv:1611.02163}, 2016.

\bibitem[Yeh et~al.(2017)Yeh, Chen, Yian~Lim, Schwing, Hasegawa-Johnson, and
  Do]{yeh2017semantic}
Raymond~A Yeh, Chen Chen, Teck Yian~Lim, Alexander~G Schwing, Mark
  Hasegawa-Johnson, and Minh~N Do.
\newblock Semantic image inpainting with deep generative models.
\newblock In \emph{Proceedings of the IEEE Conference on Computer Vision and
  Pattern Recognition}, pages 5485--5493, 2017.

\bibitem[Creswell and Bharath(2018)]{creswell2018inverting}
Antonia Creswell and Anil~Anthony Bharath.
\newblock Inverting the generator of a generative adversarial network.
\newblock \emph{IEEE transactions on neural networks and learning systems},
  2018.

\bibitem[Lipton and Tripathi(2017)]{lipton2017precise}
Zachary~C Lipton and Subarna Tripathi.
\newblock Precise recovery of latent vectors from generative adversarial
  networks.
\newblock \emph{arXiv preprint arXiv:1702.04782}, 2017.

\bibitem[Donahue et~al.(2016)Donahue, Kr{\"a}henb{\"u}hl, and
  Darrell]{donahue2016adversarial}
Jeff Donahue, Philipp Kr{\"a}henb{\"u}hl, and Trevor Darrell.
\newblock Adversarial feature learning.
\newblock \emph{arXiv preprint arXiv:1605.09782}, 2016.

\bibitem[Dumoulin et~al.(2016)Dumoulin, Belghazi, Poole, Mastropietro, Lamb,
  Arjovsky, and Courville]{dumoulin2016adversarially}
Vincent Dumoulin, Ishmael Belghazi, Ben Poole, Olivier Mastropietro, Alex Lamb,
  Martin Arjovsky, and Aaron Courville.
\newblock Adversarially learned inference.
\newblock \emph{arXiv preprint arXiv:1606.00704}, 2016.

\bibitem[Luo et~al.(2017)Luo, Xu, Tang, and Lv]{luo2017learning}
Junyu Luo, Yong Xu, Chenwei Tang, and Jiancheng Lv.
\newblock Learning inverse mapping by autoencoder based generative adversarial
  nets.
\newblock In \emph{International Conference on Neural Information Processing},
  pages 207--216. Springer, 2017.

\bibitem[Mahendran and Vedaldi(2015)]{mahendran2015understanding}
Aravindh Mahendran and Andrea Vedaldi.
\newblock Understanding deep image representations by inverting them.
\newblock In \emph{Proceedings of the IEEE conference on computer vision and
  pattern recognition}, pages 5188--5196, 2015.

\bibitem[Wang and Bovik(2009)]{wang2009mean}
Zhou Wang and Alan~C Bovik.
\newblock Mean squared error: Love it or leave it? a new look at signal
  fidelity measures.
\newblock \emph{IEEE signal processing magazine}, 26\penalty0 (1):\penalty0
  98--117, 2009.

\bibitem[Johnson et~al.(2016)Johnson, Alahi, and
  Fei-Fei]{johnson2016perceptual}
Justin Johnson, Alexandre Alahi, and Li~Fei-Fei.
\newblock Perceptual losses for real-time style transfer and super-resolution.
\newblock In \emph{European conference on computer vision}, pages 694--711.
  Springer, 2016.

\bibitem[Dosovitskiy and Brox(2016)]{dosovitskiy2016generating}
Alexey Dosovitskiy and Thomas Brox.
\newblock Generating images with perceptual similarity metrics based on deep
  networks.
\newblock In \emph{Advances in neural information processing systems}, pages
  658--666, 2016.

\bibitem[Szegedy et~al.(2016)Szegedy, Vanhoucke, Ioffe, Shlens, and
  Wojna]{szegedy2016rethinking}
Christian Szegedy, Vincent Vanhoucke, Sergey Ioffe, Jon Shlens, and Zbigniew
  Wojna.
\newblock Rethinking the inception architecture for computer vision.
\newblock In \emph{Proceedings of the IEEE conference on computer vision and
  pattern recognition}, pages 2818--2826, 2016.

\bibitem[Goodfellow et~al.(2009)Goodfellow, Lee, Le, Saxe, and
  Ng]{goodfellow2009measuring}
Ian Goodfellow, Honglak Lee, Quoc~V Le, Andrew Saxe, and Andrew~Y Ng.
\newblock Measuring invariances in deep networks.
\newblock In \emph{Advances in neural information processing systems}, pages
  646--654, 2009.

\bibitem[Bengio et~al.(2013)Bengio, Mesnil, Dauphin, and
  Rifai]{bengio2013better}
Yoshua Bengio, Gr{\'e}goire Mesnil, Yann Dauphin, and Salah Rifai.
\newblock Better mixing via deep representations.
\newblock In \emph{International conference on machine learning}, pages
  552--560, 2013.

\bibitem[Zhang et~al.(2018)Zhang, Goodfellow, Metaxas, and
  Odena]{zhang2018self}
Han Zhang, Ian Goodfellow, Dimitris Metaxas, and Augustus Odena.
\newblock Self-attention generative adversarial networks.
\newblock \emph{arXiv preprint arXiv:1805.08318}, 2018.

\bibitem[Wang et~al.(2018)Wang, Girshick, Gupta, and He]{wang2018non}
Xiaolong Wang, Ross Girshick, Abhinav Gupta, and Kaiming He.
\newblock Non-local neural networks.
\newblock In \emph{Proceedings of the IEEE Conference on Computer Vision and
  Pattern Recognition}, pages 7794--7803, 2018.

\end{thebibliography}

\newpage

\begin{center}
\large{Supplementary Material of}\\
\vspace{2mm}
\huge{Exploiting GAN Internal Capacity for
High-Quality Reconstruction of Natural Images}
\end{center}

\vspace{2mm}
Source code for reproducing the experiments is available at:
\begin{center}
\url{https://github.com/CIFASIS/exploiting-gan-internal-capacity}
\end{center}

\section*{Linear interpolation in the space of the dense layer}
We show more examples of linear interpolation in the space of the dense layer,
as a demonstration of the quality of obtained representations
with the proposed two-step optimization. Note that all intermediate images of
these interpolations can not be generated from the latent space.

\paragraph{Representation gap.}
Figure \ref{fig:delta_interpolation} shows examples of linear interpolation
between the best reconstruction in the latent space and the best reconstruction
in the space of the dense layer: 
\[
G_2(G_1(z^*) + \alpha\ \delta^*) \quad \text{with}\ \alpha \in [0,1].
\]

\paragraph{Intra class interpolation.}
Figure \ref{fig:linear_interp_sup} shows examples of linear interpolation
between two reconstructions of different real images ($x_1$ and $x_2$) in the
same class:
\[
G_2(\alpha\ h_1 + (1 - \alpha)\ h_2)  \quad \text{with}\ \alpha \in [0,1]
\]
where $h_1 = G_1(z_1^*) + \delta_1^*$ and $h_2 = G_1(z_2^*) + \delta_2^*$
are the obtained representations of $x_1$ and $x_2$
in the space of the dense layer.

\paragraph{Random interpolation.}
Figure \ref{fig:random_interpolation} shows examples of linear interpolation
between the inverted real image 
and random generated images in the same class: 
\[
G_2(\alpha\ h_1 + (1 - \alpha)\ h_2) \quad \text{with}\ \alpha \in [0,1]
\]
where $h_1 = G_1(z^*) + \delta^*$ and
$h_2 = G_1(z)$, $z \sim \N(0,1).$

\section*{Unsupervised segmentation}
Figure \ref{fig:segmentation_sup} includes more examples of segmented real images,
clustering the
self-attention map in the non-local block of BigGAN \cite{brock2018large}.
First, images are
inverted to the dense layer with the two-step optimization, obtaining a
representation $h^* = G_1(z^*)+\delta^*$.
Let $G_{att} : \R^d \rightarrow \R^{N \times N}$ represent the mapping
from the dense layer to the attention map. Then, we cluster the output of
$G_{att}(h^*)$ and generate a segmentation of the image.

\begin{figure}
  \centering
  \includegraphics[width=0.49\linewidth]{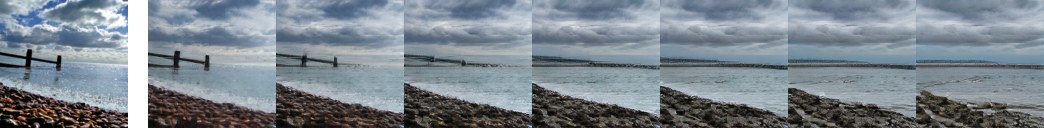}
  \hspace*{\fill}
  \includegraphics[width=0.49\linewidth]{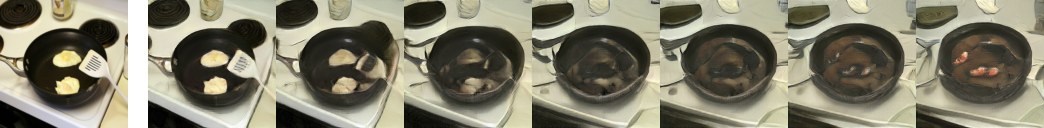}
  \includegraphics[width=0.49\linewidth]{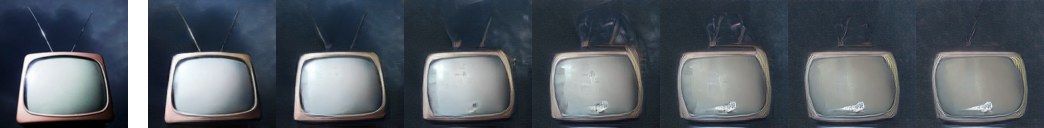}
  \hspace*{\fill}
  \includegraphics[width=0.49\linewidth]{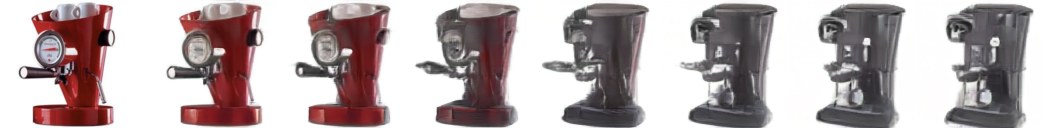}
  \includegraphics[width=0.49\linewidth]{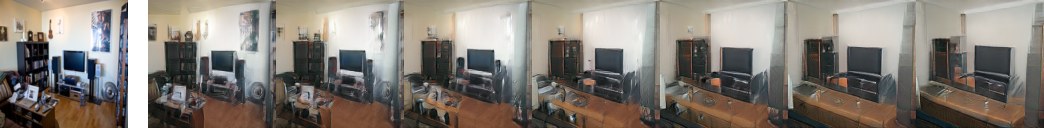}
  \hspace*{\fill}
  \includegraphics[width=0.49\linewidth]{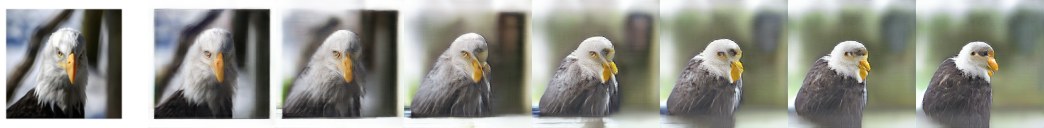}
  \includegraphics[width=0.49\linewidth]{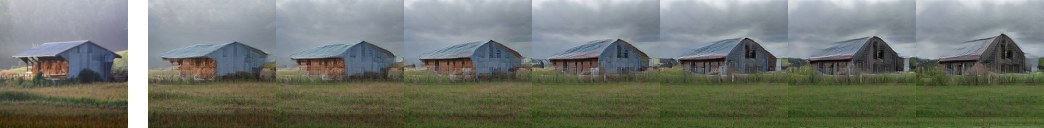}
  \hspace*{\fill}
  \includegraphics[width=0.49\linewidth]{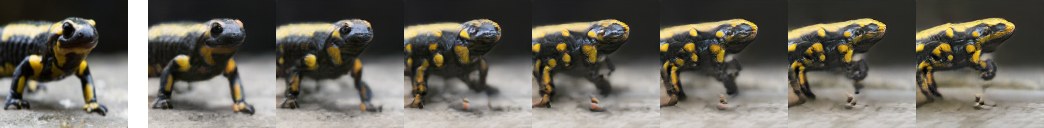}
  \includegraphics[width=0.49\linewidth]{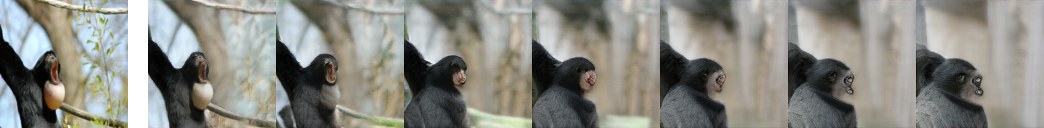}
  \hspace*{\fill}
  \includegraphics[width=0.49\linewidth]{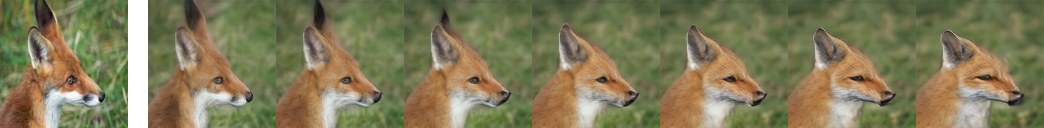}
  \includegraphics[width=0.49\linewidth]{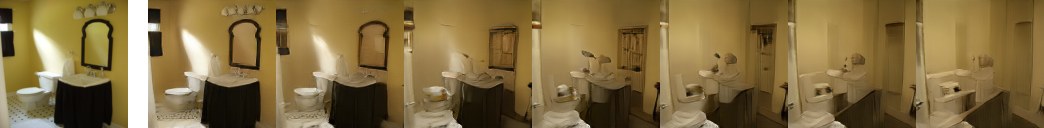}
  \hspace*{\fill}
  \includegraphics[width=0.49\linewidth]{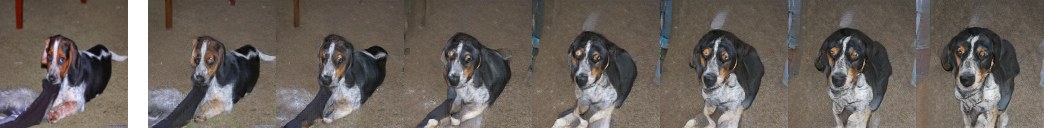}
  \includegraphics[width=0.49\linewidth]{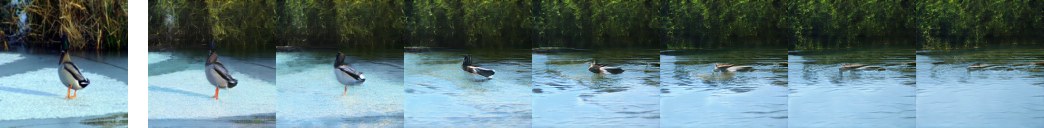}
  \hspace*{\fill}
  \includegraphics[width=0.49\linewidth]{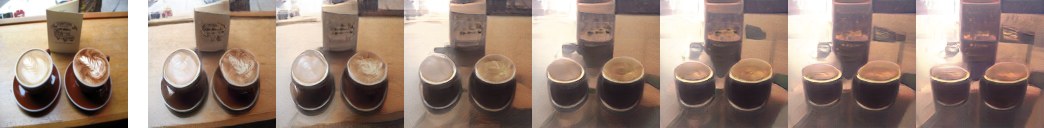}
  \includegraphics[width=0.49\linewidth]{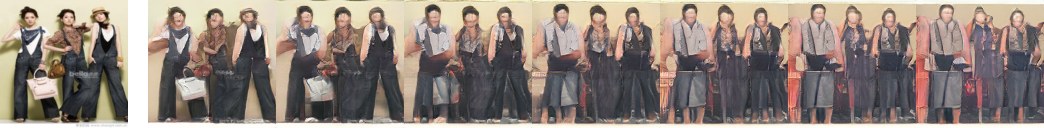}
  \hspace*{\fill}
  \includegraphics[width=0.49\linewidth]{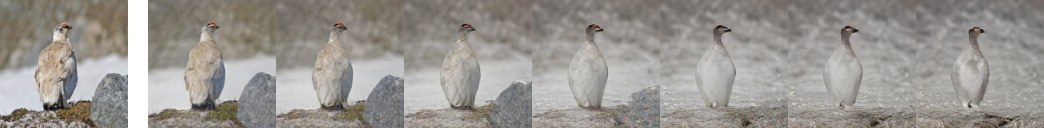}
  \includegraphics[width=0.49\linewidth]{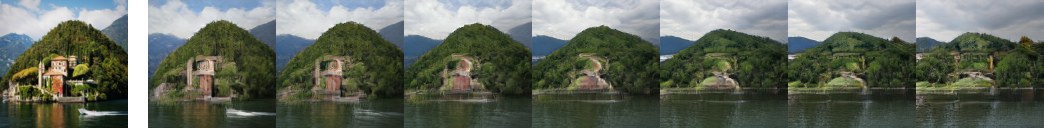}
  \hspace*{\fill}
  \includegraphics[width=0.49\linewidth]{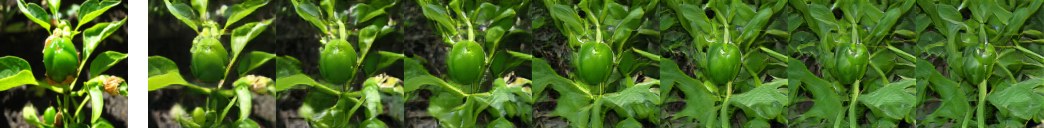}
  \includegraphics[width=0.49\linewidth]{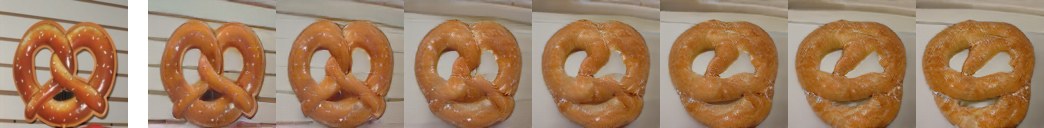}
  \hspace*{\fill}
  \includegraphics[width=0.49\linewidth]{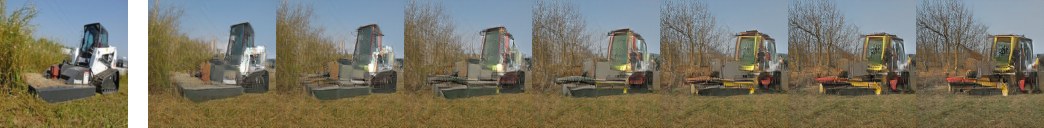}
  \includegraphics[width=0.49\linewidth]{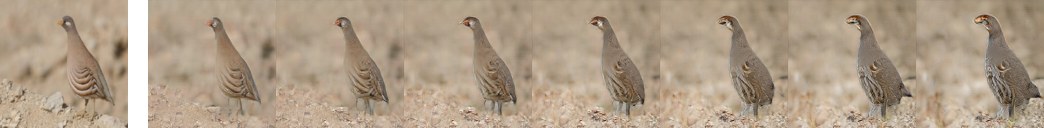}
  \hspace*{\fill}
  \includegraphics[width=0.49\linewidth]{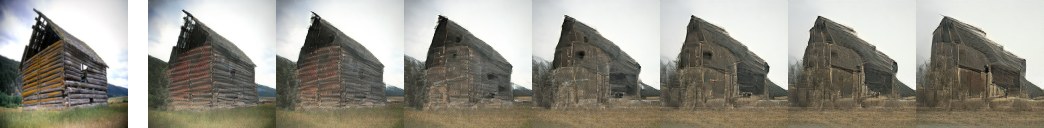}
  \includegraphics[width=0.49\linewidth]{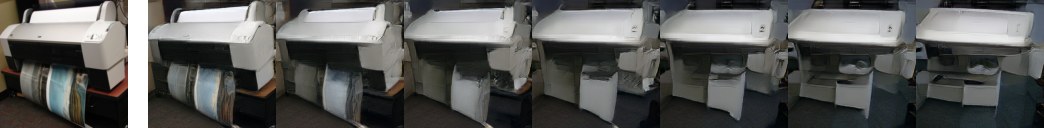}
  \hspace*{\fill}
  \includegraphics[width=0.49\linewidth]{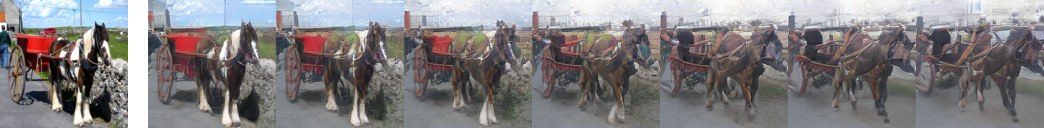}
  \includegraphics[width=0.49\linewidth]{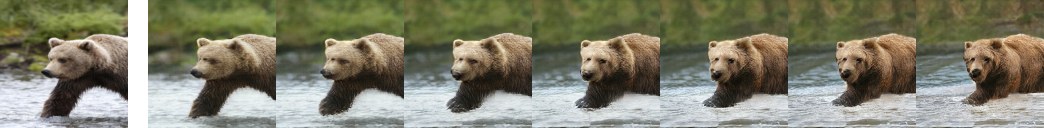}
  \hspace*{\fill}
  \includegraphics[width=0.49\linewidth]{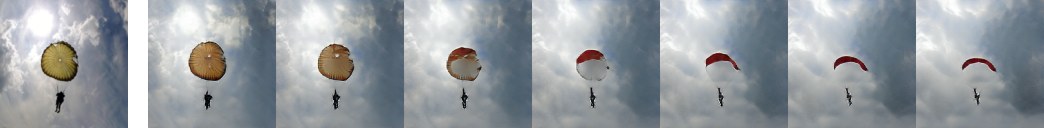}
  \includegraphics[width=0.49\linewidth]{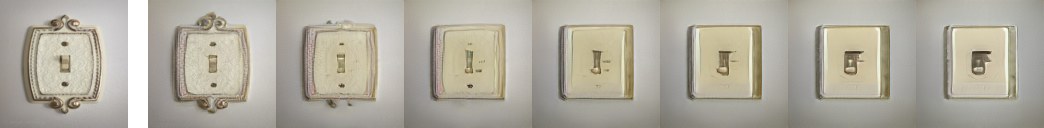}
  \hspace*{\fill}
  \includegraphics[width=0.49\linewidth]{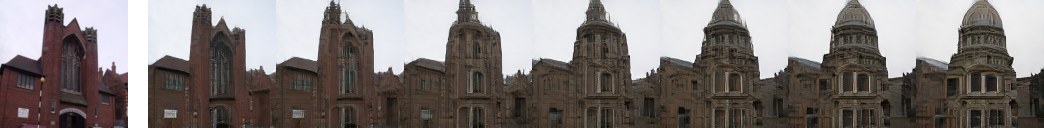}
  \includegraphics[width=0.49\linewidth]{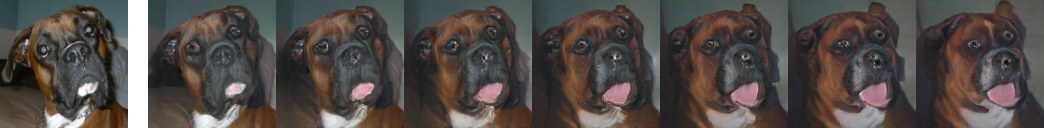}
  \hspace*{\fill}
  \includegraphics[width=0.49\linewidth]{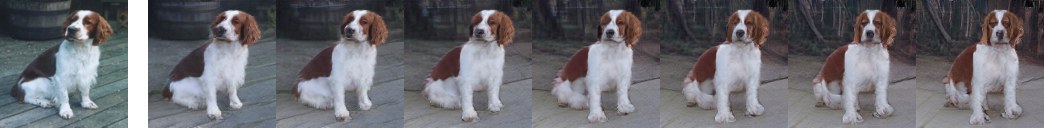}
  \includegraphics[width=0.49\linewidth]{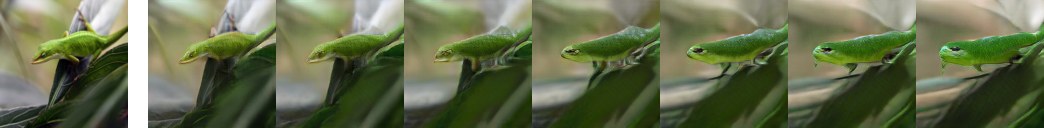}
  \hspace*{\fill}
  \includegraphics[width=0.49\linewidth]{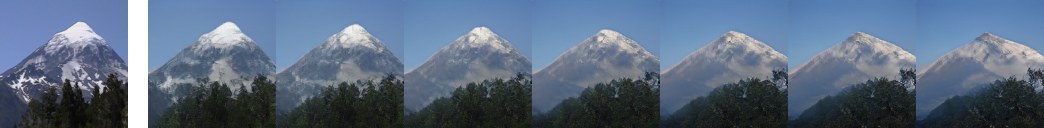}
  \includegraphics[width=0.49\linewidth]{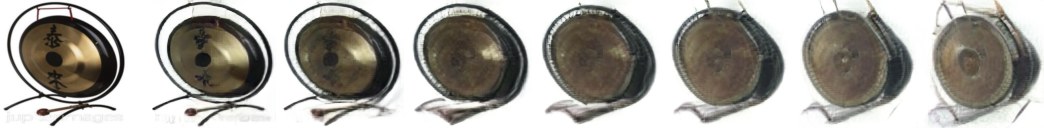}
  \hspace*{\fill}
  \includegraphics[width=0.49\linewidth]{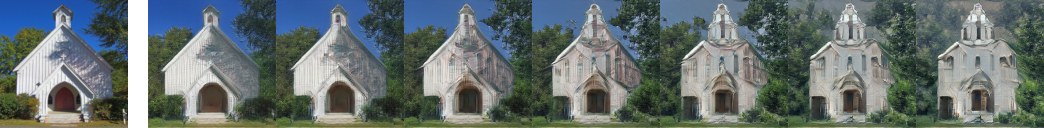}
  \includegraphics[width=0.49\linewidth]{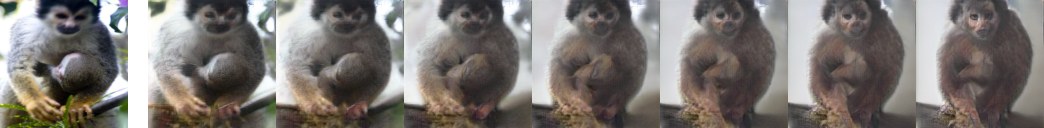}
  \hspace*{\fill}
  \includegraphics[width=0.49\linewidth]{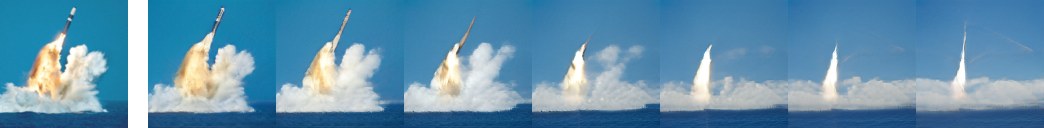}
  \includegraphics[width=0.49\linewidth]{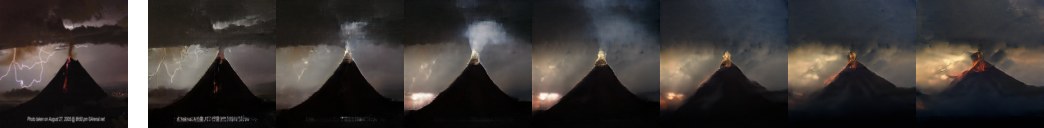}
  \hspace*{\fill}
  \includegraphics[width=0.49\linewidth]{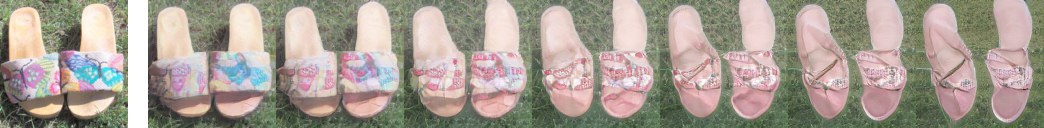}
  \includegraphics[width=0.49\linewidth]{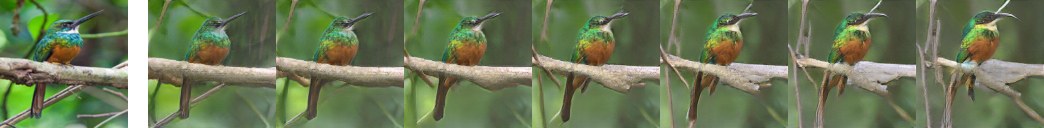}
  \hspace*{\fill}
  \includegraphics[width=0.49\linewidth]{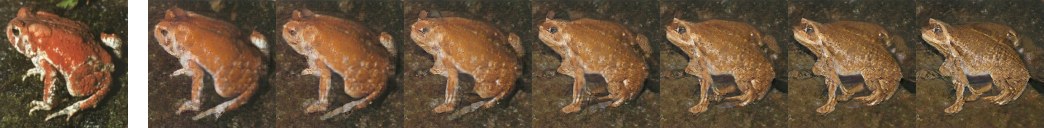}
  \includegraphics[width=0.49\linewidth]{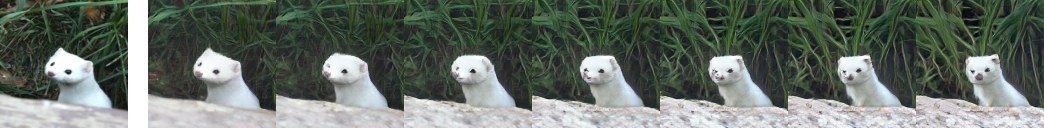}
  \hspace*{\fill}
  \includegraphics[width=0.49\linewidth]{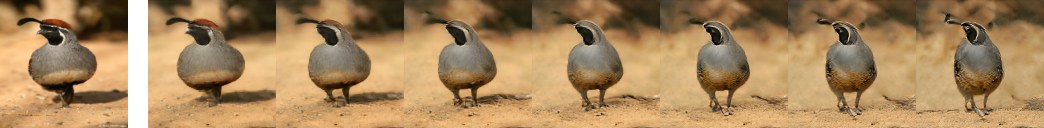}
  \includegraphics[width=0.49\linewidth]{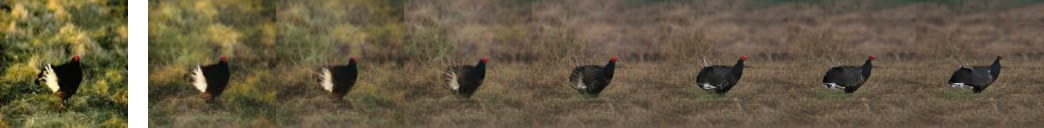}
  \hspace*{\fill}
  \includegraphics[width=0.49\linewidth]{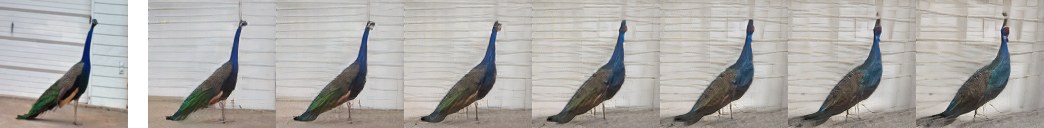}
  \includegraphics[width=0.49\linewidth]{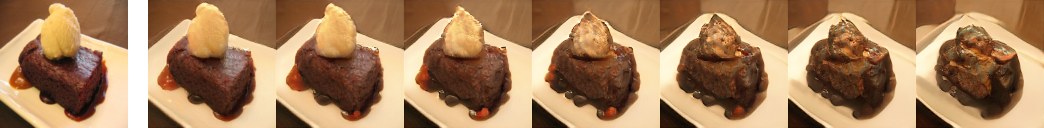}
  \hspace*{\fill}
  \includegraphics[width=0.49\linewidth]{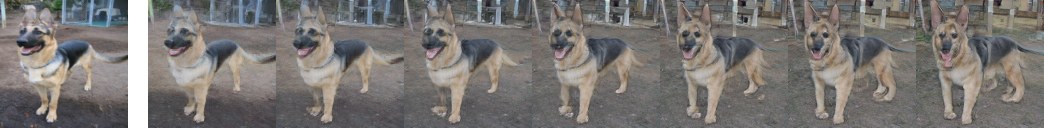}
  \includegraphics[width=0.49\linewidth]{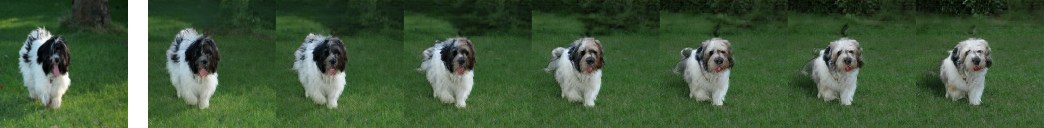}
  \hspace*{\fill}
  \includegraphics[width=0.49\linewidth]{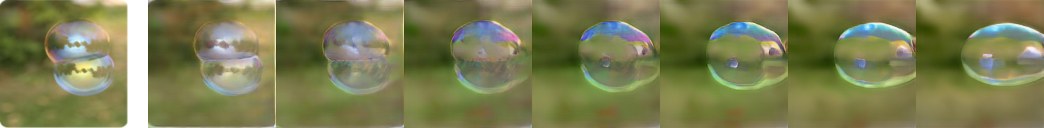}
  \caption{Left: Real image. Right: Linear interpolation between its closest
    reconstruction in the first step of the optimization
    ($G_2(G_1(z^*))$, right) and the closest reconstruction in the second step
    ($G_2(G_1(z^*)+\delta^*)$, left). Note that except from the right-most
    column, the rest of intermediate images can not be generated from
    the latent space.}
  \label{fig:delta_interpolation}
\end{figure}


\begin{figure}
  \centering
  \includegraphics[width=0.49\linewidth]{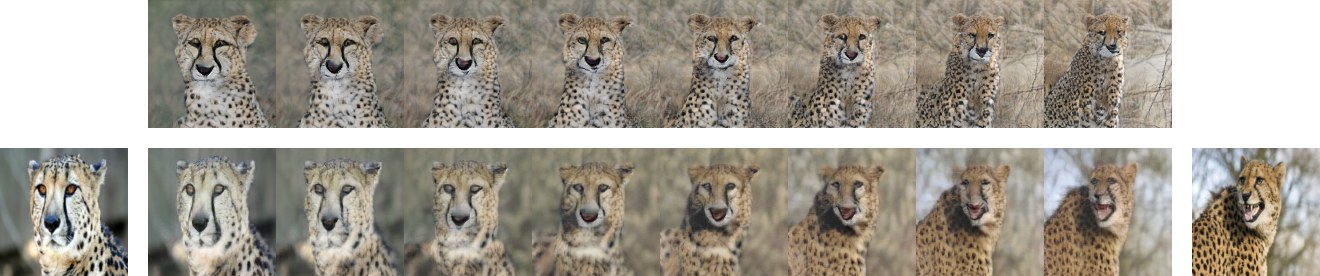}
  \hspace*{\fill}
  \includegraphics[width=0.49\linewidth]{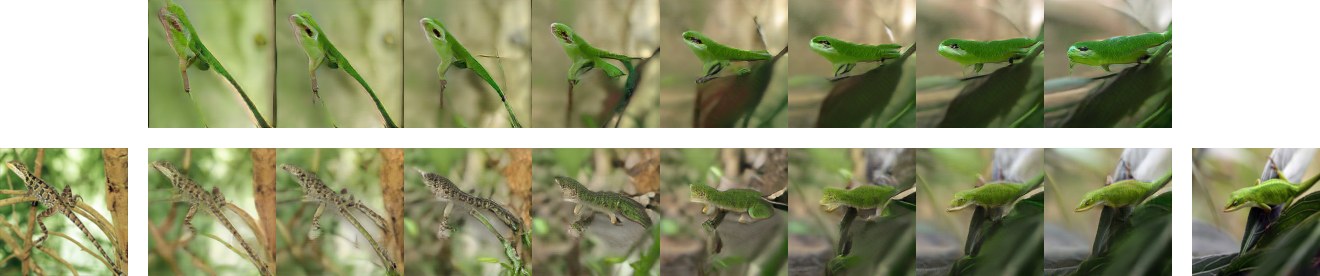}\\
  \vspace{1.5mm}
  \includegraphics[width=0.49\linewidth]{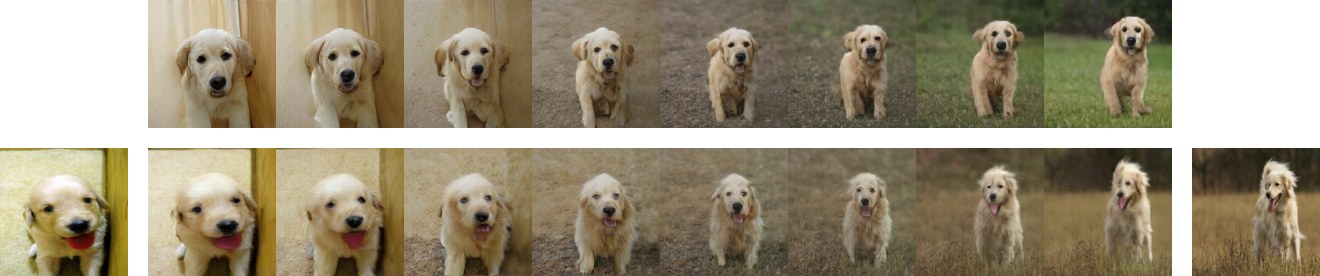}
  \hspace*{\fill}
  \includegraphics[width=0.49\linewidth]{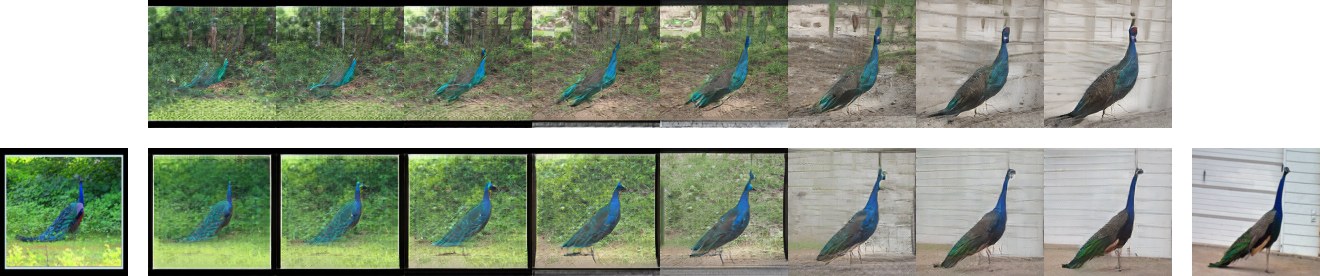}\\
  \vspace{1.5mm}
  \includegraphics[width=0.49\linewidth]{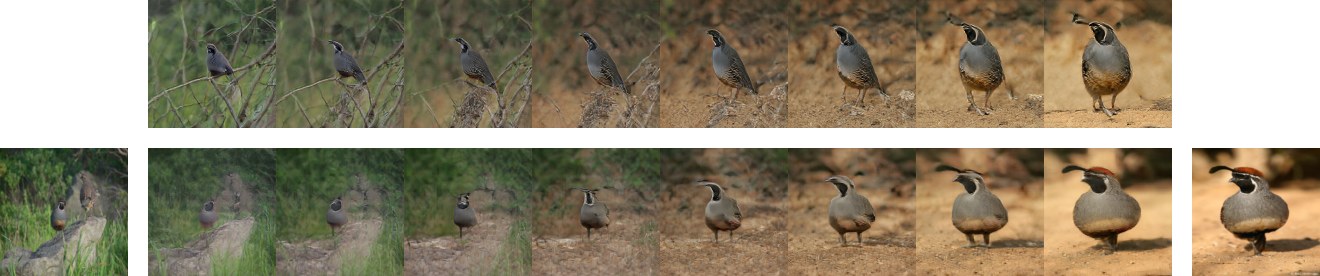}
  \hspace*{\fill}
  \includegraphics[width=0.49\linewidth]{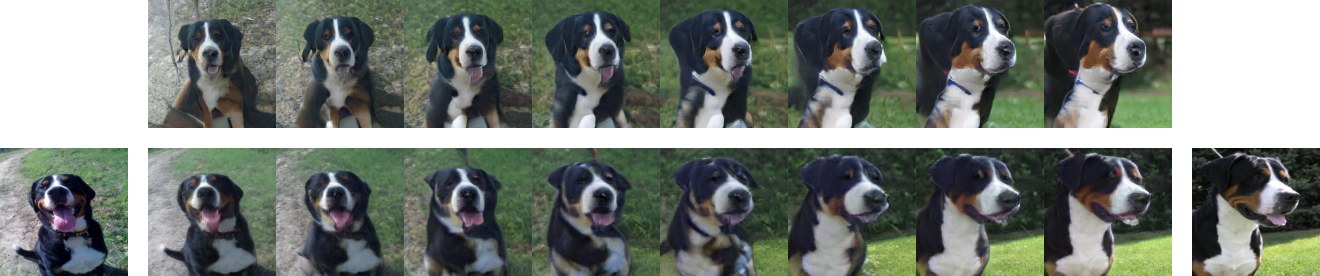}\\
  \vspace{1.5mm}
  \includegraphics[width=0.49\linewidth]{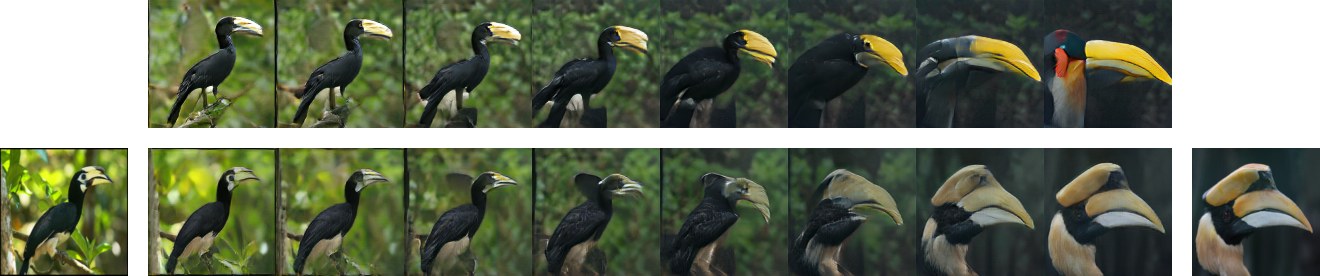}
  \hspace*{\fill}
  \includegraphics[width=0.49\linewidth]{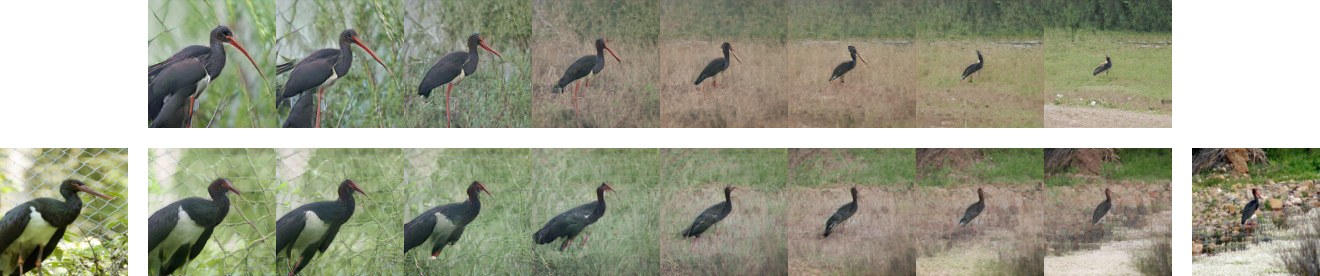}\\
  \vspace{1.5mm}
  \includegraphics[width=0.49\linewidth]{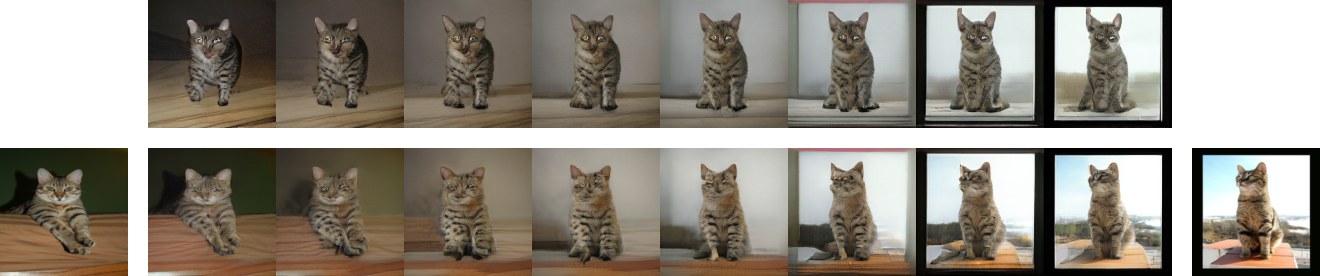}
  \hspace*{\fill}
  \includegraphics[width=0.49\linewidth]{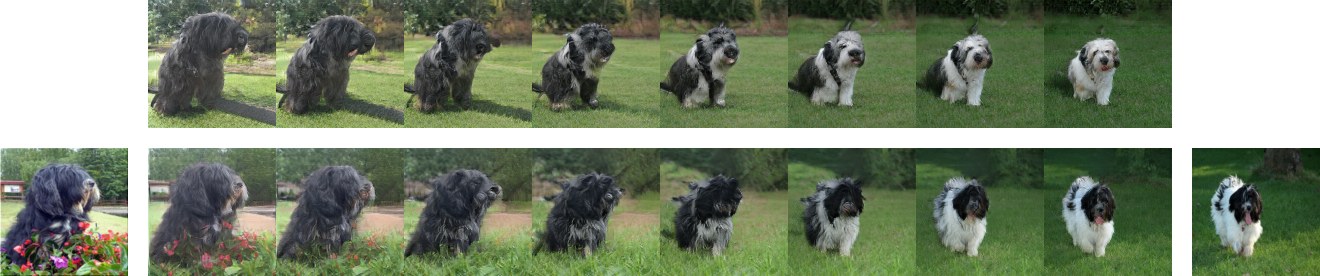}\\
  \vspace{1.5mm}
  \includegraphics[width=0.49\linewidth]{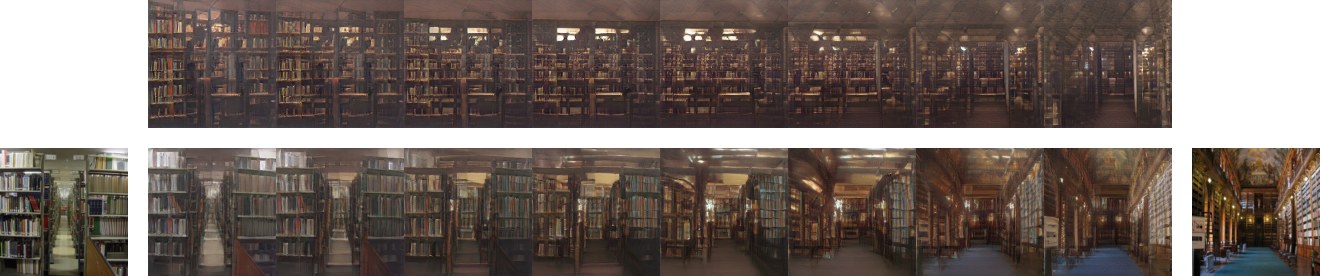}
  \hspace*{\fill}
  \includegraphics[width=0.49\linewidth]{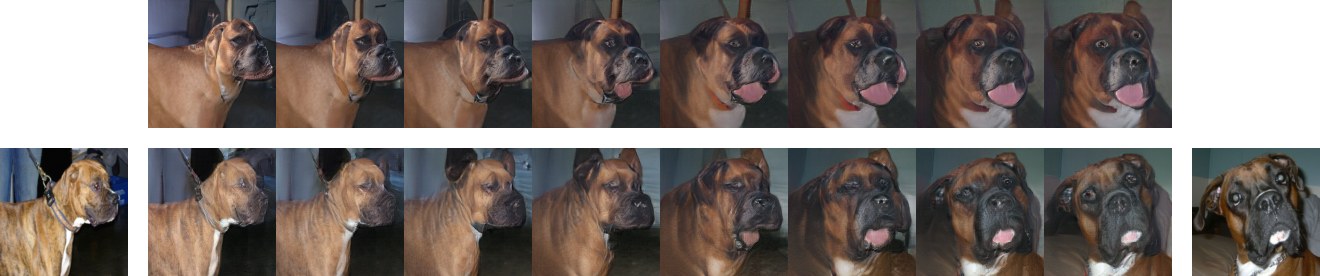}\\
  \vspace{1.5mm}
  \includegraphics[width=0.49\linewidth]{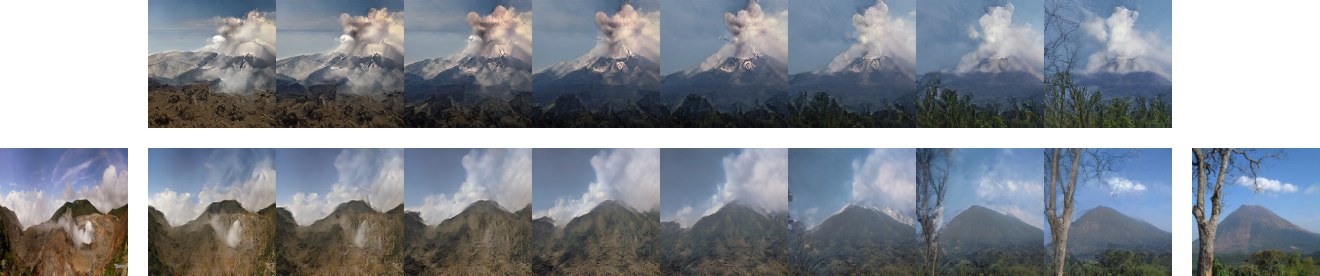}
  \hspace*{\fill}
  \includegraphics[width=0.49\linewidth]{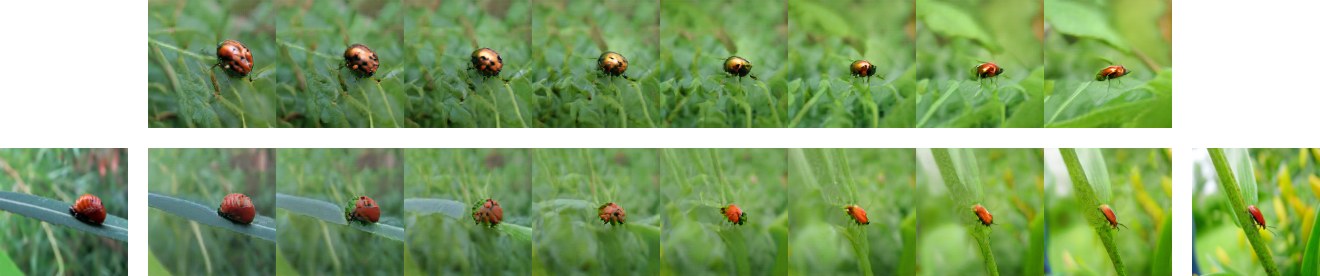}\
  \vspace{1.5mm}
  \includegraphics[width=0.49\linewidth]{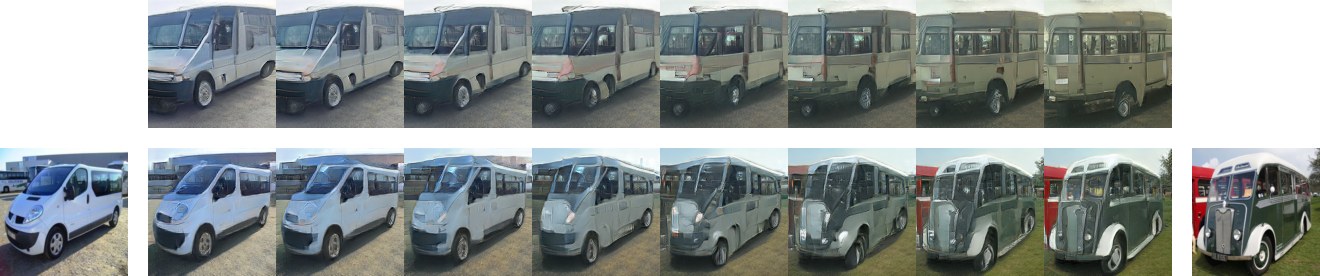}
  \hspace*{\fill}
  \includegraphics[width=0.49\linewidth]{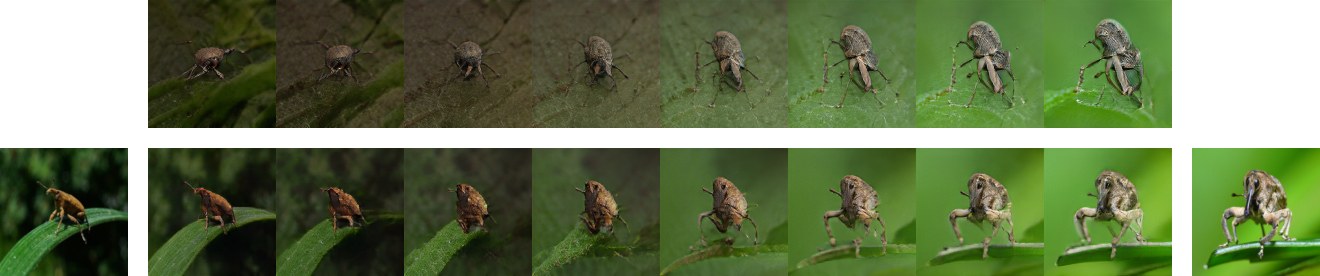}\\
  \vspace{1.5mm}
  \includegraphics[width=0.49\linewidth]{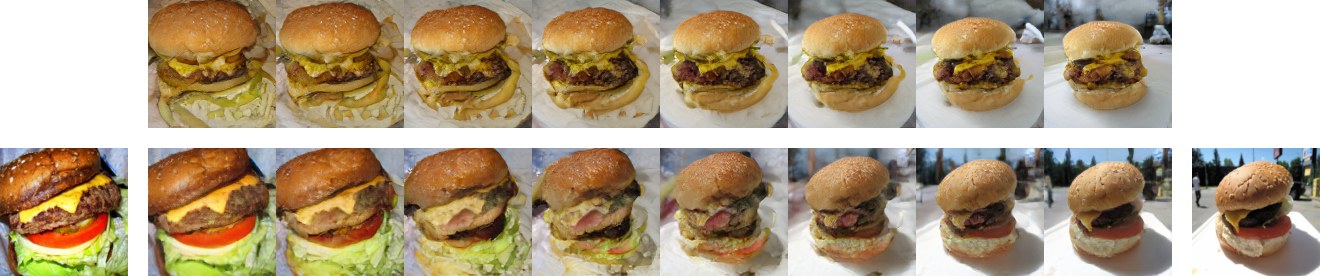}
  \hspace*{\fill}
  \includegraphics[width=0.49\linewidth]{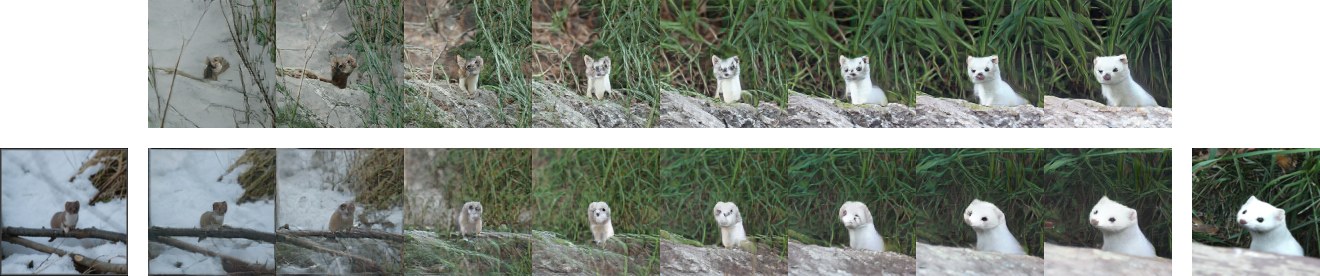}\\
  \vspace{1.5mm}
  \includegraphics[width=0.49\linewidth]{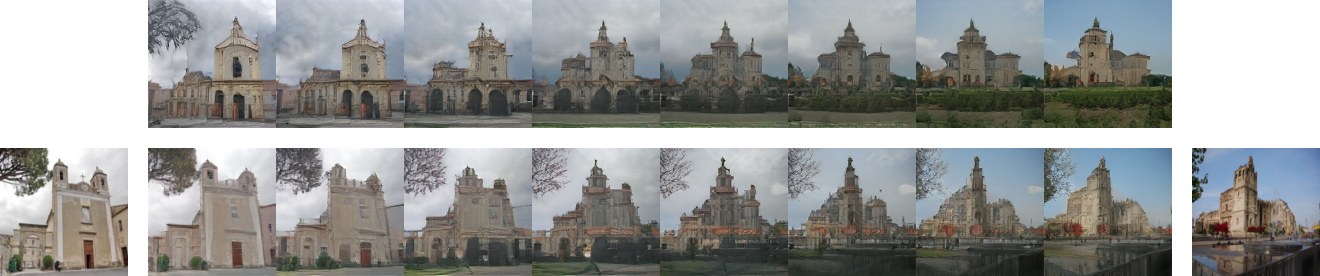}
  \hspace*{\fill}
  \includegraphics[width=0.49\linewidth]{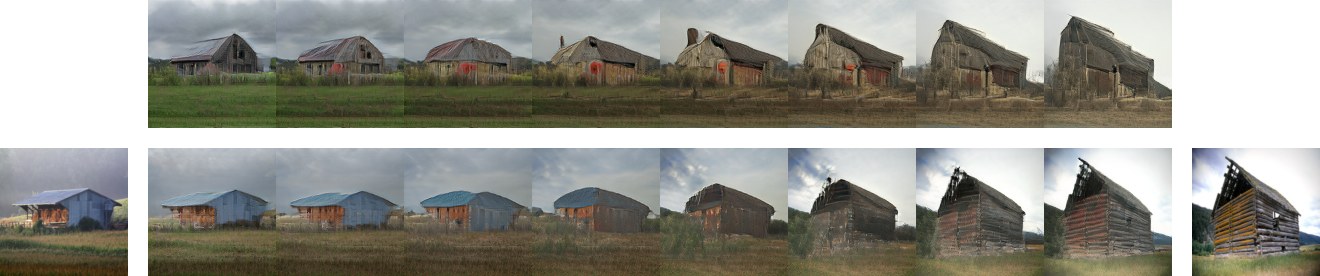}\\
  \vspace{1.5mm}
  \includegraphics[width=0.49\linewidth]{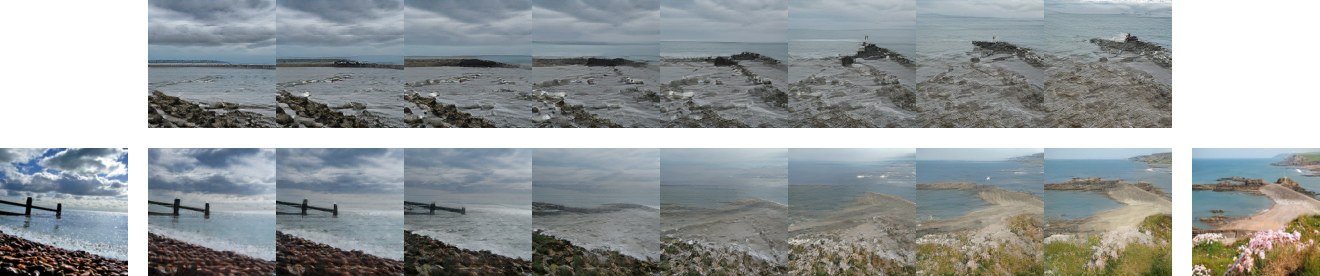}
  \hspace*{\fill}
  \includegraphics[width=0.49\linewidth]{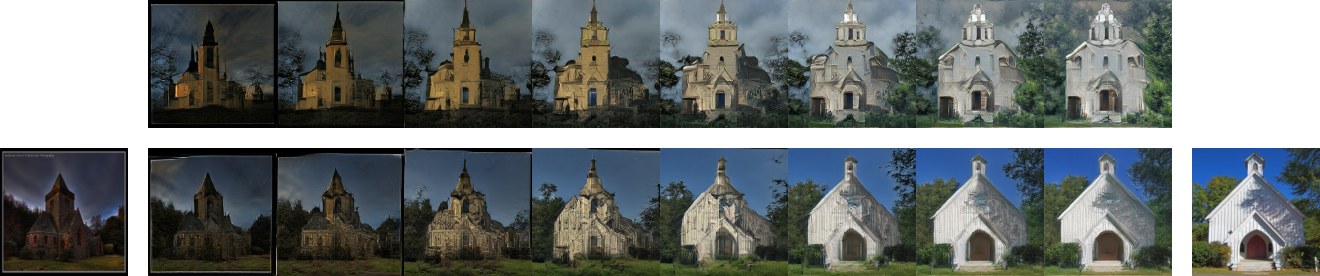}\\
  \vspace{1.5mm}
  \includegraphics[width=0.49\linewidth]{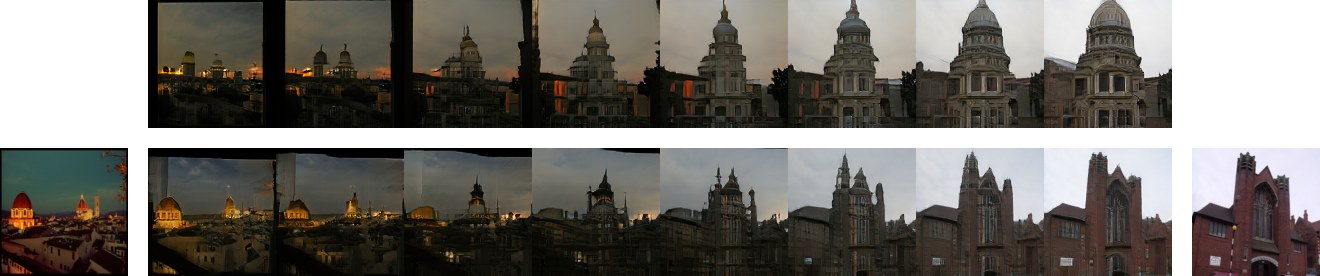}
  \hspace*{\fill}
  \includegraphics[width=0.49\linewidth]{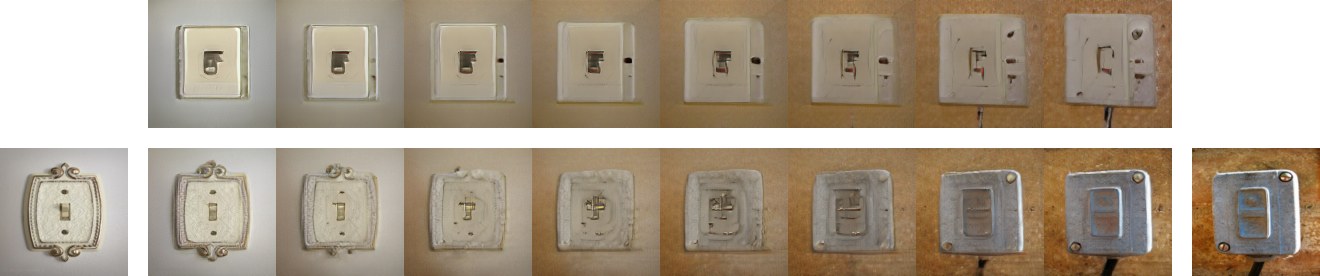}\\
  \vspace{1.5mm}
  \includegraphics[width=0.49\linewidth]{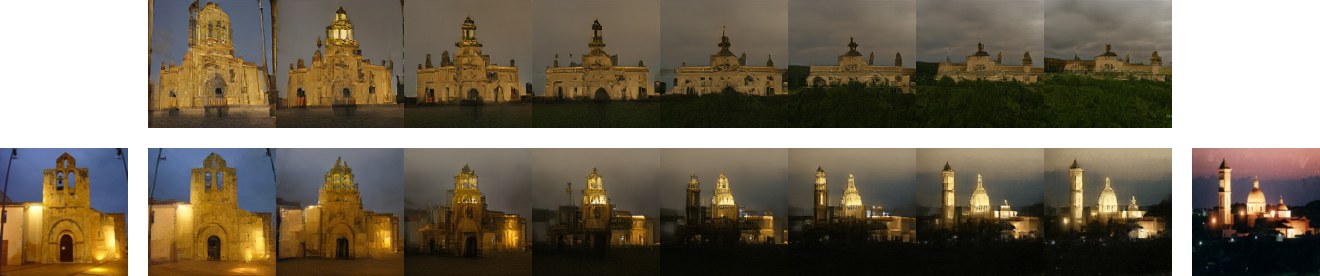}
  \hspace*{\fill}
  \includegraphics[width=0.49\linewidth]{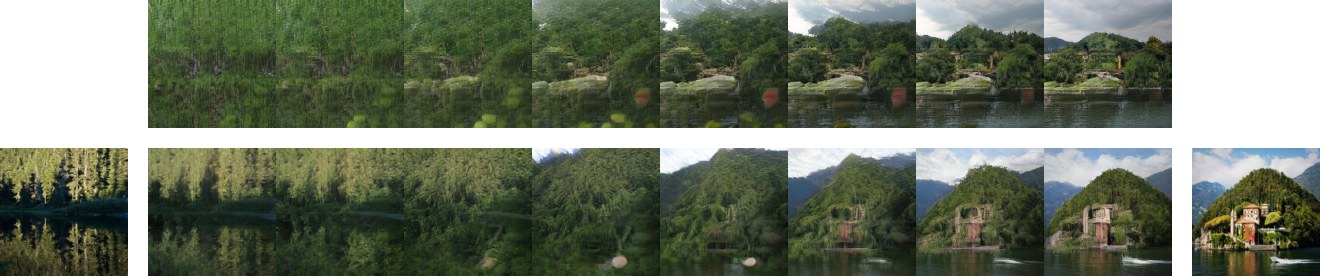}\\
  \caption{Linear interpolation between the reconstruction of two real images
    for the same class. First row: reconstruction in the latent space.
    Second row: reconstruction in the dense layer with the proposed
    two-step optimization.}
  \label{fig:linear_interp_sup}
\end{figure}

\begin{figure}
  \centering
  \includegraphics[width=0.49\linewidth]{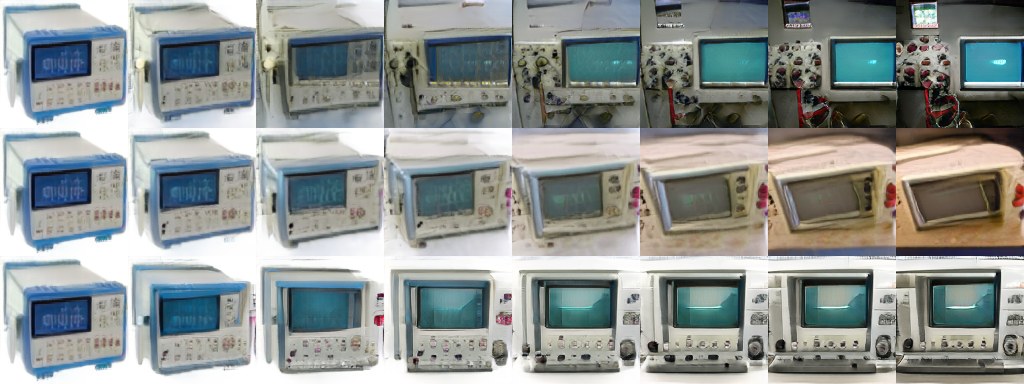}
  \includegraphics[width=0.49\linewidth]{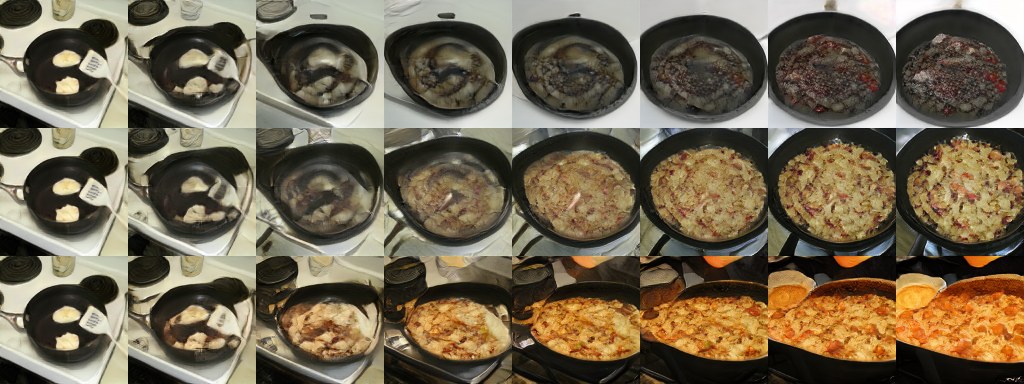}
  \includegraphics[width=0.49\linewidth]{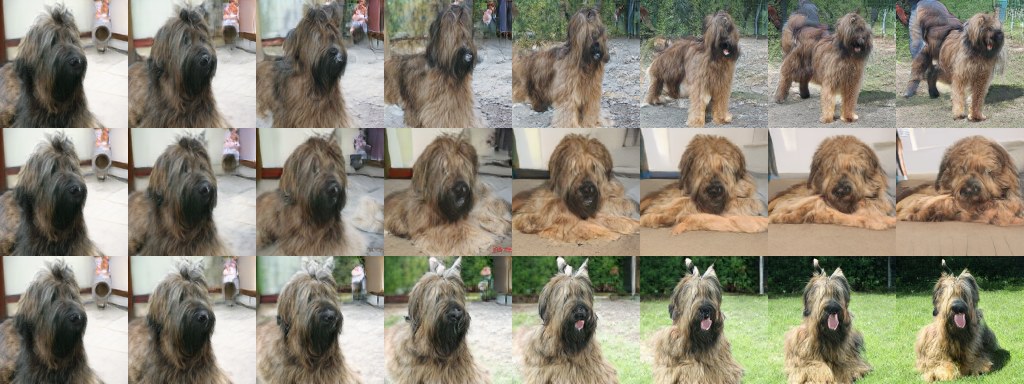}
  \includegraphics[width=0.49\linewidth]{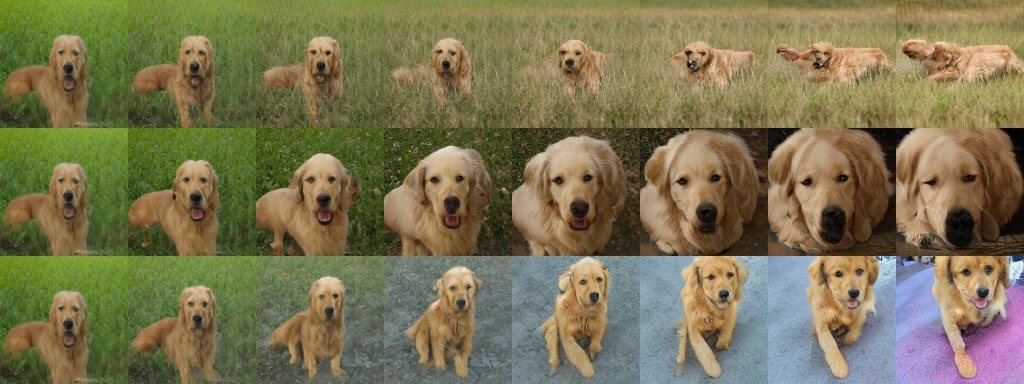}
  \includegraphics[width=0.49\linewidth]{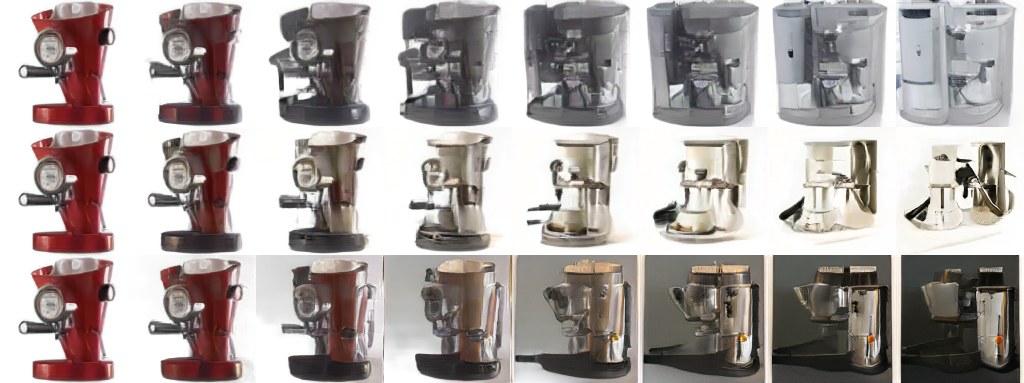}
  \includegraphics[width=0.49\linewidth]{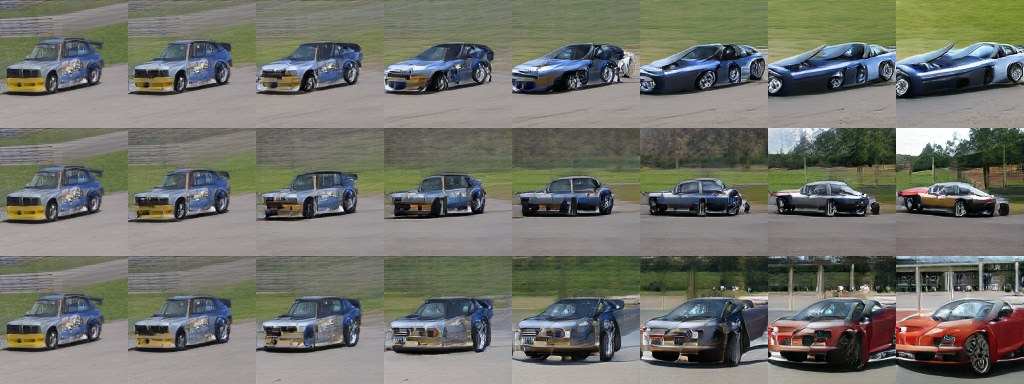}
  \includegraphics[width=0.49\linewidth]{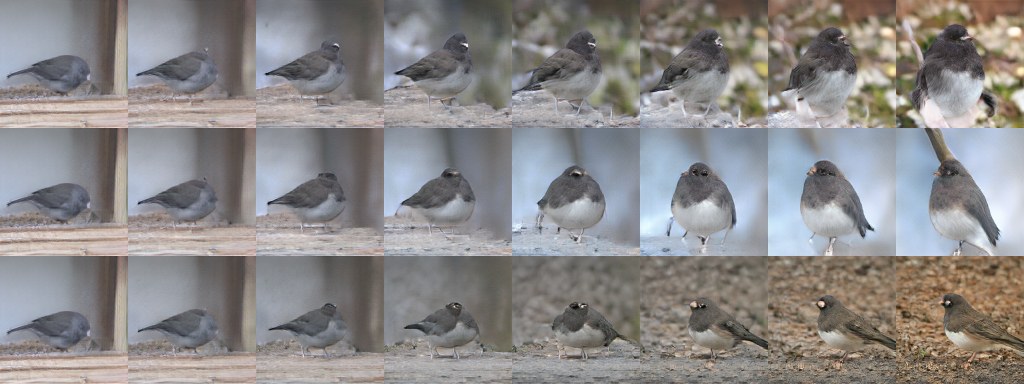}
  \includegraphics[width=0.49\linewidth]{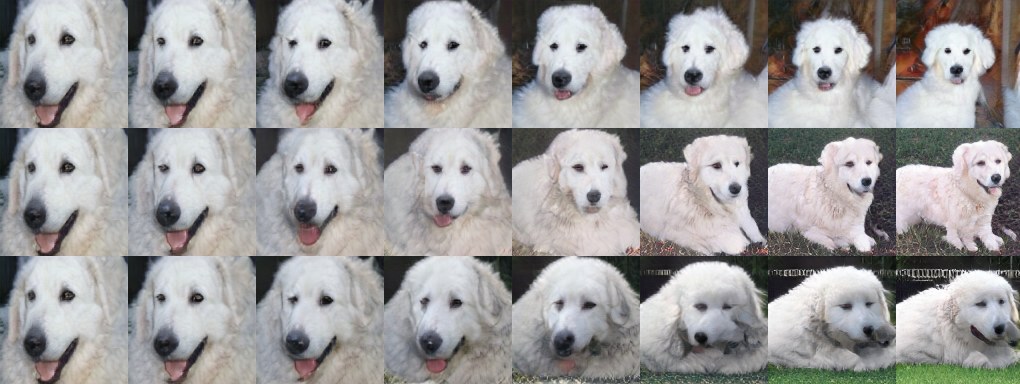}
  \includegraphics[width=0.49\linewidth]{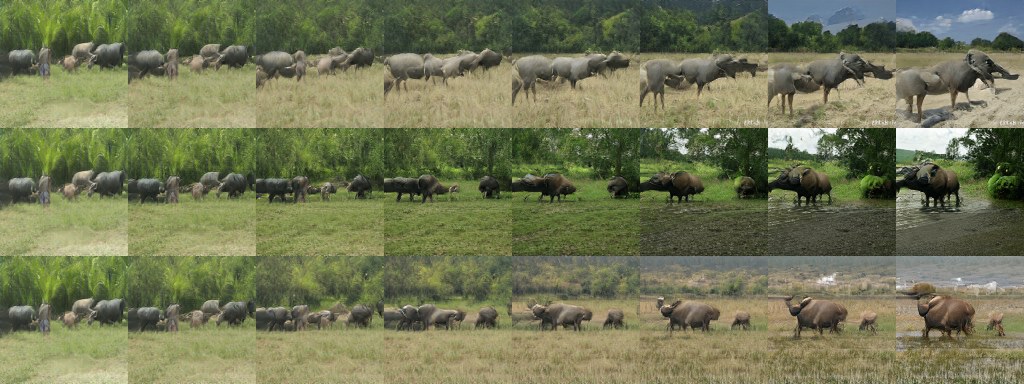}
  \includegraphics[width=0.49\linewidth]{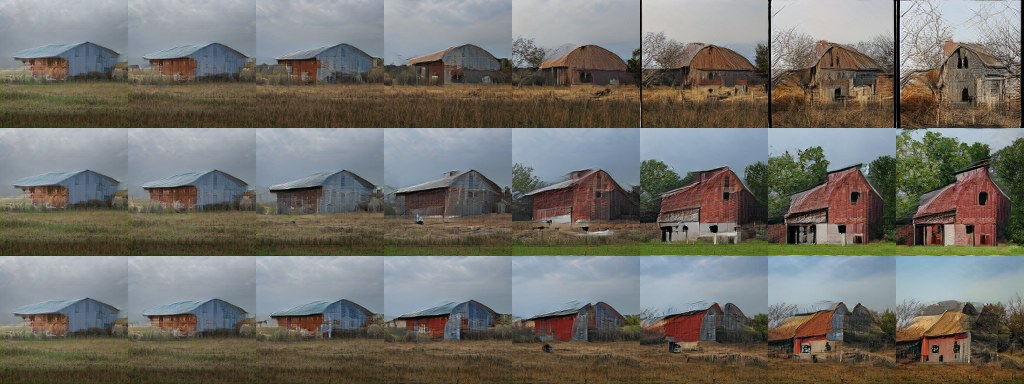}
  \includegraphics[width=0.49\linewidth]{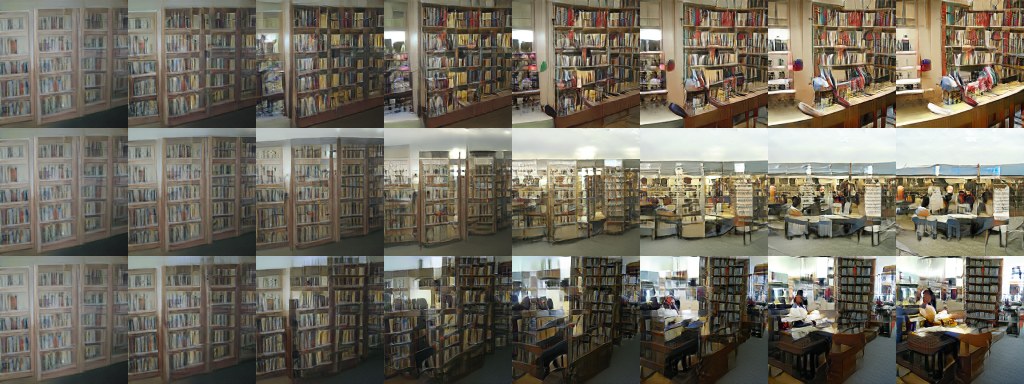}
  \includegraphics[width=0.49\linewidth]{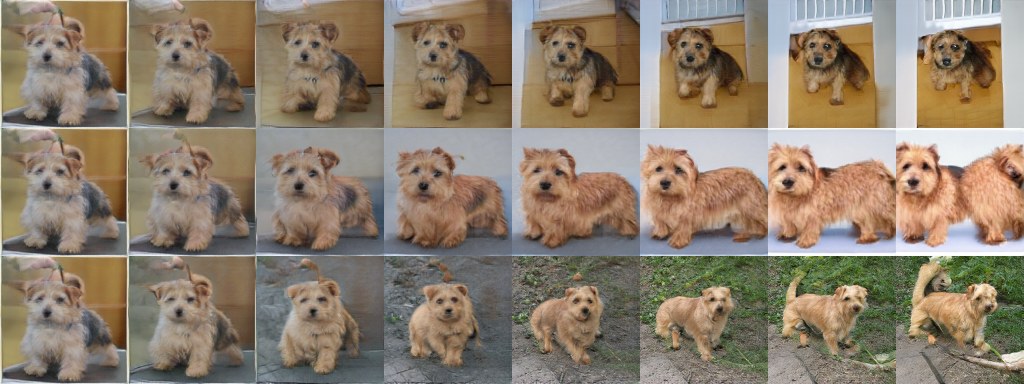}
  \caption{Linear interpolation in the space of the dense layer, between the
    reconstruction of a real image (left) and a random generated image in the
    same class (right). Note that except from the right-most column, the rest of
    intermediate images can not be generated from the latent space.}
  \label{fig:random_interpolation}
\end{figure}


\begin{figure}
  \centering
  \includegraphics[width=0.4\linewidth]{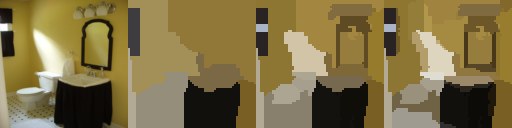}
  \hspace{0.007\linewidth}
  \includegraphics[width=0.4\linewidth]{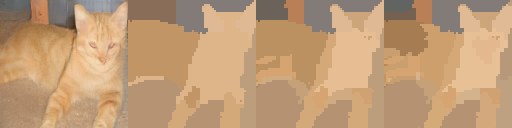}
  \includegraphics[width=0.4\linewidth]{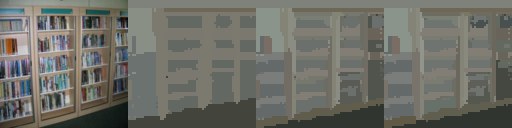}
  \hspace{0.007\linewidth}
  \includegraphics[width=0.4\linewidth]{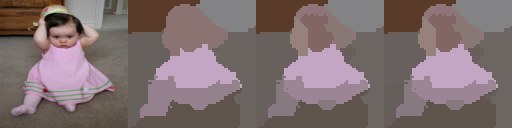}
  \includegraphics[width=0.4\linewidth]{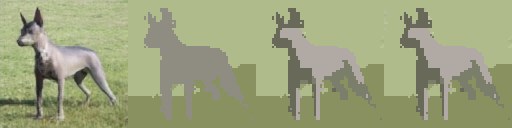}
  \hspace{0.007\linewidth}
  \includegraphics[width=0.4\linewidth]{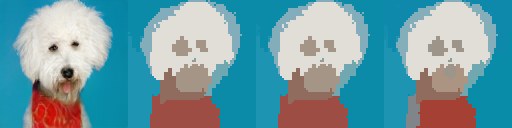}
  \includegraphics[width=0.4\linewidth]{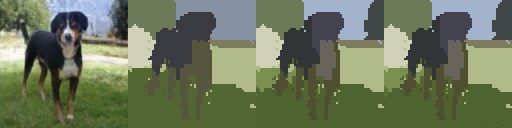}
  \hspace{0.007\linewidth}
  \includegraphics[width=0.4\linewidth]{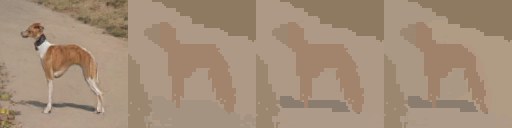}
  \includegraphics[width=0.4\linewidth]{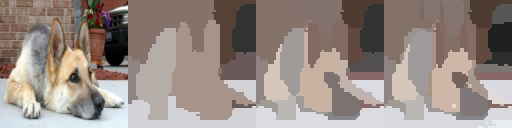}
  \hspace{0.007\linewidth}
  \includegraphics[width=0.4\linewidth]{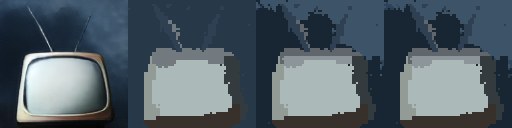}
  \includegraphics[width=0.4\linewidth]{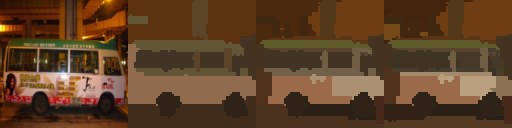}
  \hspace{0.007\linewidth}
  \includegraphics[width=0.4\linewidth]{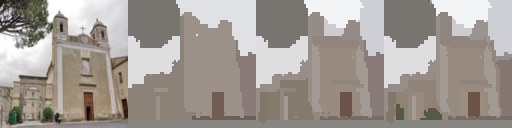}
  \includegraphics[width=0.4\linewidth]{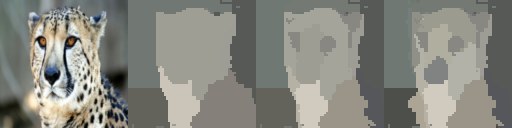}
  \hspace{0.007\linewidth}
  \includegraphics[width=0.4\linewidth]{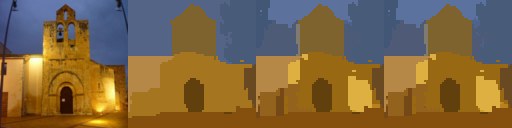}
  \includegraphics[width=0.4\linewidth]{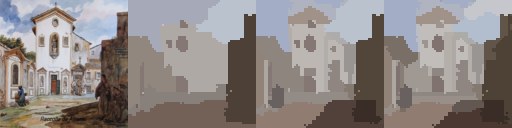}
  \hspace{0.007\linewidth}
  \includegraphics[width=0.4\linewidth]{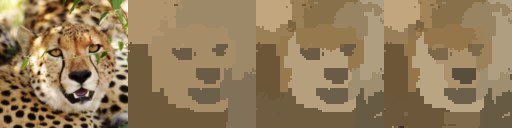}
  \includegraphics[width=0.4\linewidth]{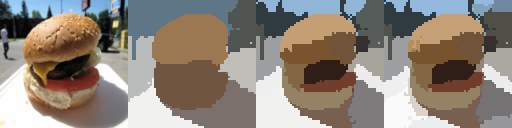}
  \hspace{0.007\linewidth}
  \includegraphics[width=0.4\linewidth]{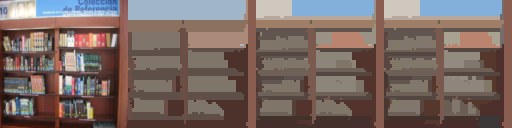}
  \includegraphics[width=0.4\linewidth]{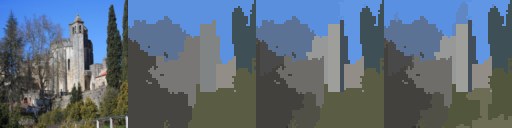}
  \hspace{0.007\linewidth}
  \includegraphics[width=0.4\linewidth]{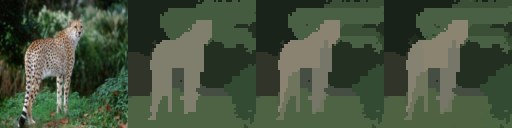}
  \includegraphics[width=0.4\linewidth]{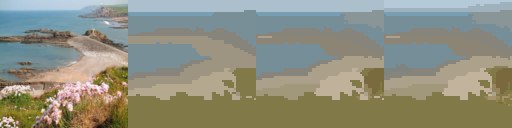}
  \hspace{0.007\linewidth}
  \includegraphics[width=0.4\linewidth]{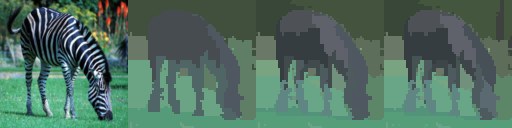}
  \includegraphics[width=0.4\linewidth]{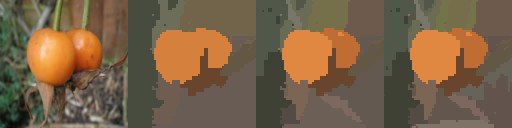}
  \hspace{0.007\linewidth}
  \includegraphics[width=0.4\linewidth]{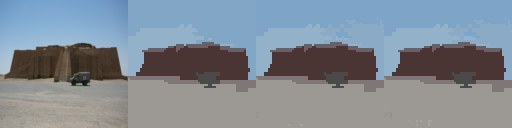}
  \includegraphics[width=0.4\linewidth]{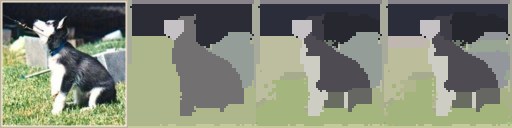}
  \hspace{0.007\linewidth}
  \includegraphics[width=0.4\linewidth]{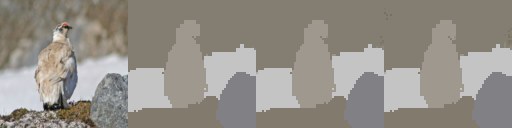}
  \includegraphics[width=0.4\linewidth]{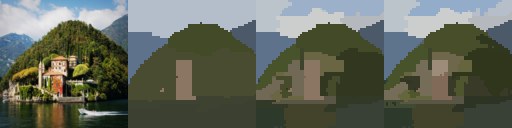}
  \hspace{0.007\linewidth}
  \includegraphics[width=0.4\linewidth]{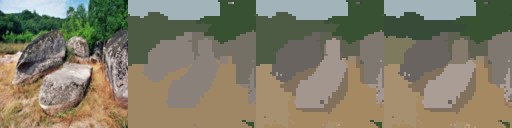}
  \includegraphics[width=0.4\linewidth]{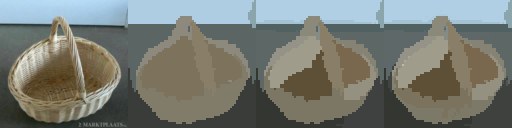}
  \hspace{0.007\linewidth}
  \includegraphics[width=0.4\linewidth]{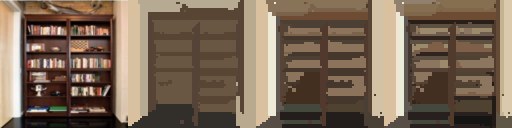}
  \caption{Examples of unsupervised segmentation (64x64) for real images, after
    inverting the generator to the first layer
    (different numbers of clusters: 8, 20, 40). Only for visualization purposes,
    each cluster is associated to the average color of all its pixel members.}.
  \label{fig:segmentation_sup}
\end{figure}

\section*{Additional experiments}
We replicated the experiments with different models and datasets: the Progressive GAN generator trained on CelebA-HQ and the Improved WGAN unconditional generator with DCGAN architecture trained on CIFAR-10. In both cases we obtain equivalent results (see for example Figure \ref{fig:interpolation_images}).

\section*{Additional applications}
Besides the proposed unsupervised segmentation, our method can be applied for manipulation of general \textit{real} images (e.g. interpolation between images, class switching, etc.). See Figs. \ref{fig:interpolation_images}, \ref{fig:frames_video} and third row of Fig. \ref{fig:reg}.

\begin{figure}[t]
  \centering
  \begin{minipage}{.485\textwidth}
    \centering
    \includegraphics[width=\linewidth]{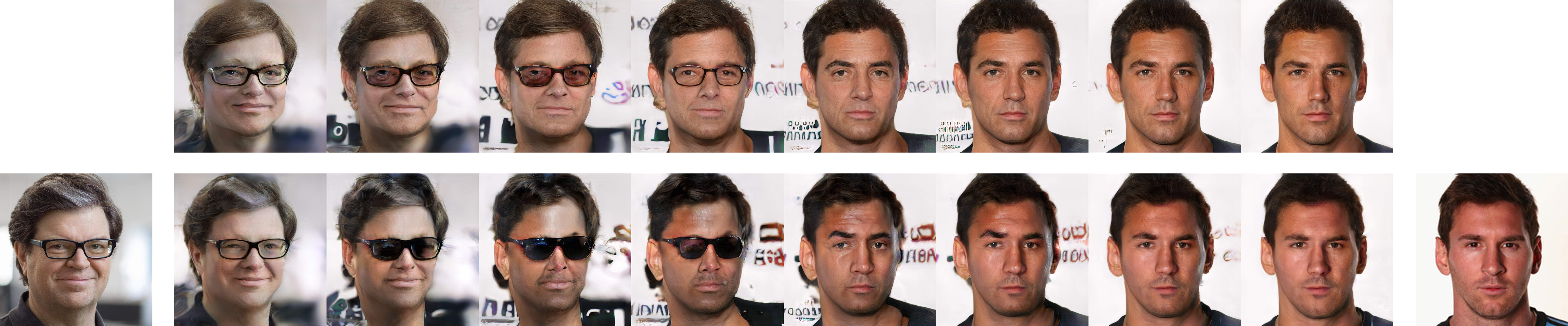}\\
    \vspace{2mm}
    \includegraphics[width=\linewidth]{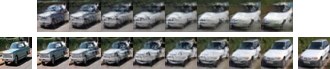}
    \captionof{figure}{Linear interpolation between the reconstruction of two real images (First row: the latent space. Second row: the dense layer). Top: Progressive GAN on CelebA-HQ. Bottom: Improved WGAN on CIFAR-10.}
    \label{fig:interpolation_images}
  \end{minipage}
  \hspace*{\fill}
  \begin{minipage}{.48\textwidth}
    \centering
    \includegraphics[width=0.88\linewidth]{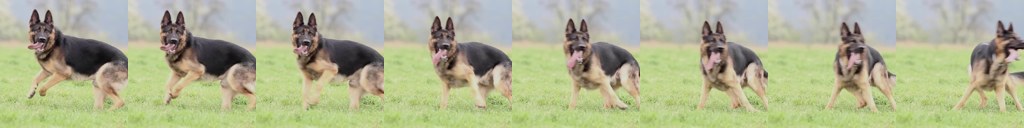}\\
    \includegraphics[width=0.88\linewidth]{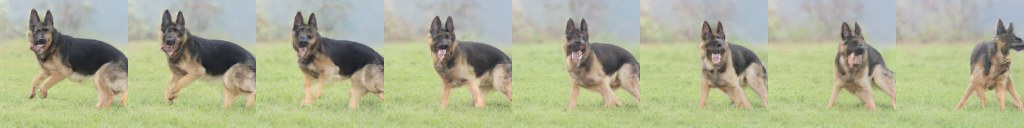}
    \includegraphics[width=0.88\linewidth]{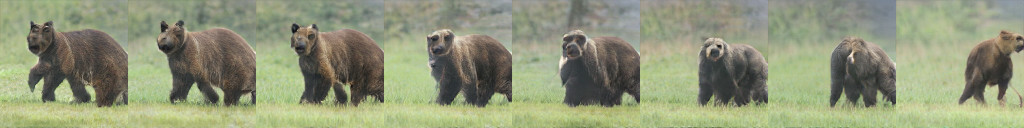}\\
    \includegraphics[width=0.88\linewidth]{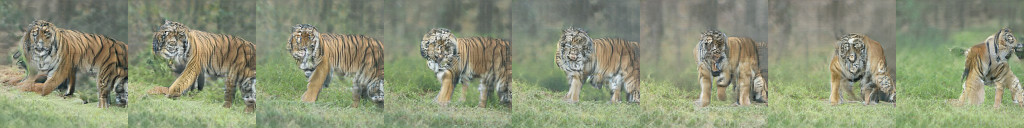}
    \captionof{figure}{Illustrative practical application of our method (BigGAN). First row: original sequence of frames. Second row: reconstruction in the first dense layer. Third and Fourth row: reconstruction after changing the input class.}
    \label{fig:frames_video}
  \end{minipage}
  
  \vspace{4mm}
  \begin{minipage}{.485\textwidth}
  \centering
  \includegraphics[width=0.11\linewidth]{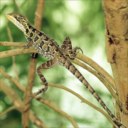}
  \includegraphics[width=0.11\linewidth]{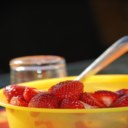}
  \includegraphics[width=0.11\linewidth]{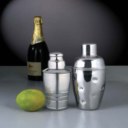}
  \includegraphics[width=0.11\linewidth]{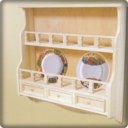}
  \includegraphics[width=0.11\linewidth]{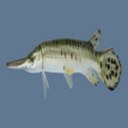}
  \includegraphics[width=0.11\linewidth]{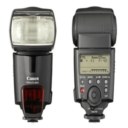}
  \includegraphics[width=0.11\linewidth]{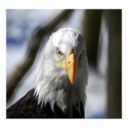}
  \includegraphics[width=0.11\linewidth]{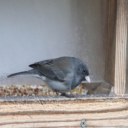}\\
  \vspace{1mm}
  \includegraphics[width=0.11\linewidth]{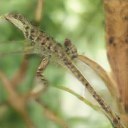}
  \includegraphics[width=0.11\linewidth]{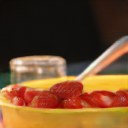}
  \includegraphics[width=0.11\linewidth]{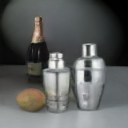}
  \includegraphics[width=0.11\linewidth]{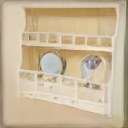}
  \includegraphics[width=0.11\linewidth]{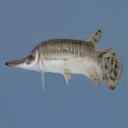}
  \includegraphics[width=0.11\linewidth]{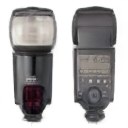}
  \includegraphics[width=0.11\linewidth]{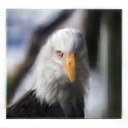}
  \includegraphics[width=0.11\linewidth]{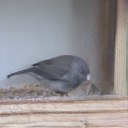}\\
  \vspace{1mm}
  \includegraphics[width=0.11\linewidth]{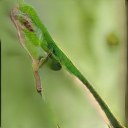}
  \includegraphics[width=0.11\linewidth]{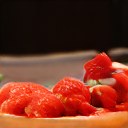}
  \includegraphics[width=0.11\linewidth]{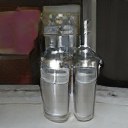}
  \includegraphics[width=0.11\linewidth]{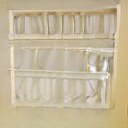}
  \includegraphics[width=0.11\linewidth]{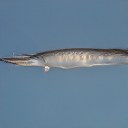}
  \includegraphics[width=0.11\linewidth]{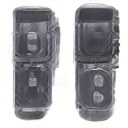}
  \includegraphics[width=0.11\linewidth]{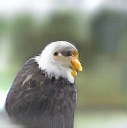}
  \includegraphics[width=0.11\linewidth]{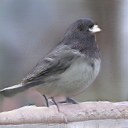}\\
  \captionof{figure}{Top: Real images. Middle: Reconstruction in the dense layer ($h^*$). Bottom: proj of $h^*$ onto $G_1(\Z)$.}
  \label{fig:projection}
  \end{minipage}
  \hspace*{\fill}
  \begin{minipage}{.485\textwidth}
    \centering
    \includegraphics[width=0.88\linewidth]{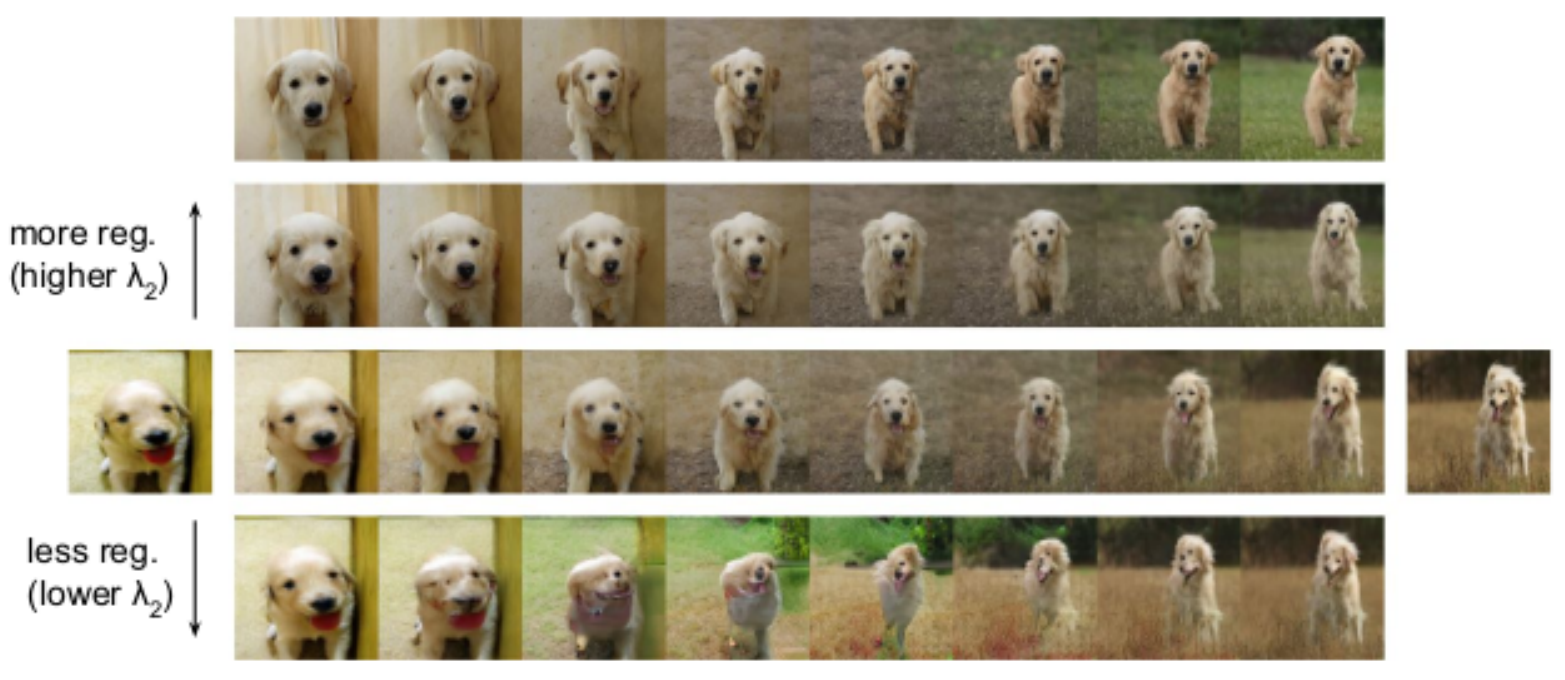}
    \captionof{figure}{Effects of regularization in the interpolation quality (top row: interpolation in the latent space).}
    \label{fig:reg}
  \end{minipage}
\end{figure}

\end{document}